\documentclass{article}

\PassOptionsToPackage{numbers, compress}{natbib}



\usepackage[final]{nips_2018}


\usepackage[utf8]{inputenc} 
\usepackage[T1]{fontenc}    
\usepackage{hyperref}       
\usepackage{url}            
\usepackage{booktabs}       
\usepackage{amsfonts}       
\usepackage{nicefrac}       
\usepackage{microtype}      
\usepackage{color}
\usepackage{subfigure}
\usepackage{graphicx}
\usepackage{wrapfig}

\usepackage[linesnumbered,algoruled,boxed,lined,algo2e]{algorithm2e}
\usepackage{amsthm}
\usepackage{amsmath}
\usepackage{amssymb}

\newtheorem{assume}{Assumption}
\newtheorem{definition}{Definition}
\newtheorem{theorem}{Theorem}
\newtheorem{lemma}{Lemma}
\newtheorem*{theorem-non}{Theorem}

\DeclareMathOperator{\X}{\mathcal{X}}

\DeclareMathOperator*{\argmin}{arg\,min}

\DeclareMathOperator*{\minimize}{minimize}

\usepackage[title]{appendix}

\title{
Theoretical Linear Convergence of Unfolded ISTA and its Practical Weights and Thresholds
}

\author{
  Xiaohan Chen\thanks{These authors contributed equally and are listed alphabetically.} \\
  Department of Computer Science and Engineering\\
  Texas A\&M University\\
  College Station, TX 77843, USA \\
  \texttt{chernxh@tamu.edu} \\
  \And
  Jialin Liu\footnotemark[1]\\
  Department of Mathematics\\
  University of California, Los Angeles \\
  Los Angeles, CA 90095, USA \\
  \texttt{liujl11@math.ucla.edu} \\
  \AND
  Zhangyang Wang \\
  Department of Computer Science and Engineering\\
  Texas A\&M University\\
  College Station, TX 77843, USA \\
  \texttt{atlaswang@tamu.edu} \\
  \And
  Wotao Yin \\
  Department of Mathematics\\
  University of California, Los Angeles \\
  Los Angeles, CA 90095, USA \\
  \texttt{wotaoyin@math.ucla.edu} \\
}

\begin{document}

\maketitle

\begin{abstract}
  In recent years, unfolding iterative algorithms as neural networks has become an
  empirical success in solving sparse recovery problems. However, its
  theoretical understanding is still immature, which prevents us from fully
  utilizing the power of neural networks. In this work, we study unfolded ISTA
  (Iterative Shrinkage Thresholding Algorithm) for sparse signal recovery. We
  introduce a weight structure that is necessary for asymptotic convergence to
  the true sparse signal. With this structure, unfolded ISTA can attain a linear
  convergence, which is better than the sublinear convergence of ISTA/FISTA in
  general cases. Furthermore, we propose to incorporate thresholding in the
  network to perform support selection, which is easy to implement and able to
  boost the convergence rate both theoretically and empirically. Extensive
  simulations, including sparse vector recovery and a compressive sensing
  experiment on real image data, corroborate our theoretical results and
  demonstrate their practical usefulness. We have made our codes publicly
  available\footnote{\url{https://github.com/xchen-tamu/linear-lista-cpss}}.
\end{abstract}

\vspace{-1em}
\section{Introduction}
\vspace{-0.5em}
This paper aims to recover a sparse vector $x^\ast$ from its noisy linear
measurements:
\begin{equation}
\label{eq:linear_model}
  b = A x^\ast + \varepsilon,
\end{equation}
where $ b\in\mathbb{R}^m $, $ x\in\mathbb{R}^n $,
$ A\in\mathbb{R}^{m \times n} $, $ \varepsilon\in\mathbb{R}^m $
is additive Gaussian white noise, and we have $ m \ll n $. (\ref{eq:linear_model}) is an ill-posed, highly under-determined system.
However, it becomes easier to solve if $x^\ast$ is assumed to be sparse, i.e. the cardinality of support of $x^\ast$, $S=\{i|x^\ast_i \neq 0\}$, is small compared to $n$.

A popular approach is to model the problem as the LASSO formulation ($\lambda$ is a scalar):
\begin{equation}
  \minimize_{x} \frac{1}{2}\|b-Ax\|_2^2 + \lambda\|x\|_1
  \label{eq:lasso}
\end{equation}
and solve it using iterative algorithms such as the iterative shrinkage thresholding
algorithm (ISTA) \cite{blumensath2008iterative}:
\begin{equation}
  x^{k+1} = \eta_{\lambda/L}\Big(x^k + \frac{1}{L}A^T(b-Ax^k)\Big),\quad k = 0, 1, 2, \ldots
  \label{eq:ista}
\end{equation}
where $\eta_{\theta}$ is the soft-thresholding function\footnote{Soft-thresholding function is defined in a component-wise way: $\eta_{\theta}(x) = \text{sign}(x)\max(0,|x|-\theta)$} and $L$ is usually taken as the largest eigenvalue of $A^TA$.
In general, ISTA converges sublinearly for any given and fixed
dictionary $A$ and sparse code $x^\ast$~\cite{beck2009fast}.

In \cite{gregor2010learning}, inspired by ISTA, the authors proposed a
learning-based model named Learned ISTA (LISTA).
They view ISTA as a recurrent neural network (RNN) that is illustrated in
Figure~\ref{fig:rnn}, where
$ W_1 = \frac{1}{L}A^T$, $W_2 = I - \frac{1}{L}A^T A$,
$\theta =\frac{1}{L}\lambda $. LISTA, illustrated in Figure~\ref{fig:lista},
unrolls the RNN and truncates it into $K$ iterations:
\begin{equation}
  x^{k+1} = \eta_{\theta^k}(W^k_1 b + W^k_2 x^k), \quad k = 0,1,\cdots,K-1,
  \label{eq:gen_ista}
\end{equation}
leading to a $K$-layer feed-forward neural network with side connections.

Different from ISTA where no parameter is learnable (except the hyper parameter
$\lambda$ to be tuned), LISTA is treated as a specially structured neural
network and trained using stochastic gradient descent (SGD), over a given
training dataset $\{(x^\ast_i,b_i)\}_{i=1}^N$  sampled from some distribution
$\mathcal{P}(x,b)$. All the parameters $\Theta =
\{(W^k_1,W^k_2,\theta^k)\}_{k=0}^{K-1}$ are subject to learning. The training is
modeled as:
\begin{equation}
\label{eq:train}
\minimize_{\Theta} \mathbb{E}_{x^\ast,b}\Big\|x^K\Big( \Theta, b, x^0 \Big) - x^\ast \Big\|_2^2.
\end{equation}

Many empirical results, e.g.,
\cite{gregor2010learning,wang2016learning,wang2016d3,wang2016learningb,wang2016learningc},
show that a trained $K$-layer LISTA (with $K$ usually set to $10 \sim 20$) or
its variants can generalize more than well to unseen samples $(x',b')$ from the
same $\mathcal{P}(x,b)$ and recover $x'$ from $b'$ to the same accuracy within
one or two order-of-magnitude fewer iterations than the original ISTA.
Moreover, the accuracies of the outputs $\{x^k\}$ of the layers $k=1,..,K$
gradually improve.

\begin{figure}[ht]
  \centering
  \begin{tabular}{cc}
  \hspace{-2mm}
\subfigure[RNN structure of ISTA.]{
  \includegraphics[width=0.30\linewidth]{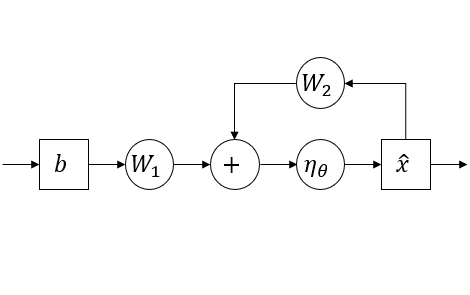}
  \label{fig:rnn}
}
&
\hspace{-4mm}
\subfigure[Unfolded learned ISTA Network.
]{
 	\includegraphics[width=0.68\linewidth]{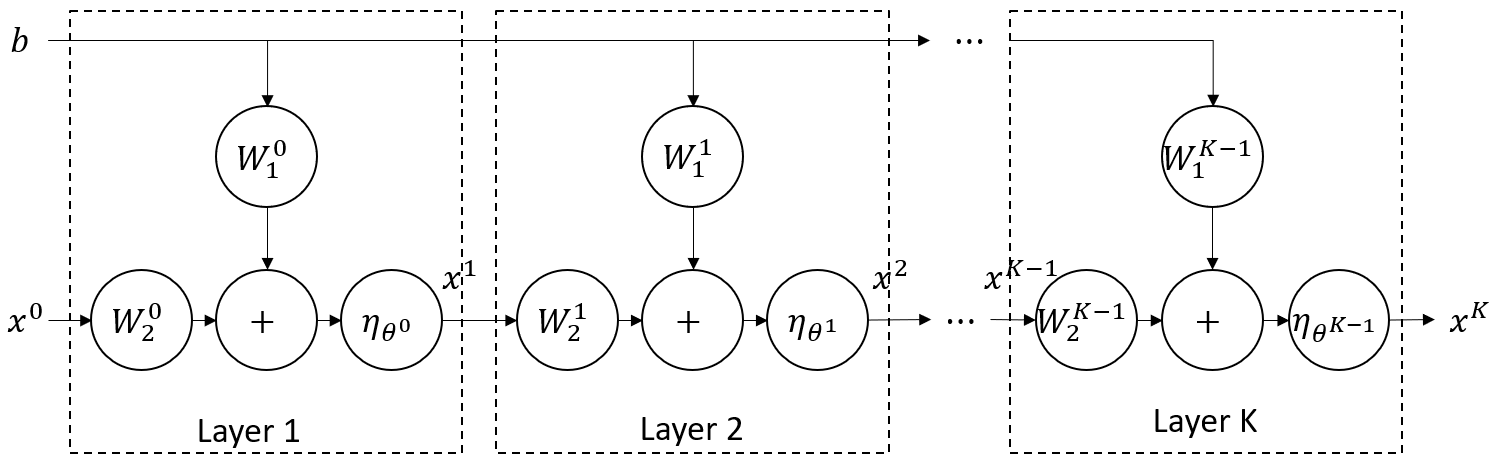}
    \label{fig:lista}
}
\end{tabular}
  \caption{Diagrams of ISTA and LISTA.}
  \vspace{-1em}
  \label{fig:rnn_lista}
\end{figure}

\subsection{Related Works}
\vspace{-0.5em}

Many recent works
\cite{sprechmann2015learning,wang2016sparse,wang2016learning,zhangista,zhou2018sc2net}
followed the idea of \cite{gregor2010learning} to construct feed-forward
networks by unfolding and truncating iterative algorithms, as fast trainable
regressors to approximate the solutions of sparse coding models. On the other
hand, progress has been slow towards understanding the efficient approximation
from a theoretical perspective. The most relevant works are discussed below.

\cite{moreau2017understanding} attempted to explain the mechanism of LISTA by
re-factorizing the Gram matrix of dictionary, which tries to nearly diagonalize
the Gram matrix with a basis that produces a small perturbation of the $\ell_1$
ball. They re-parameterized LISTA into a new factorized architecture that
achieved similar acceleration gain to LISTA. Using an ``indirect'' proof,
\cite{moreau2017understanding} was able to show that LISTA can converge faster
than ISTA, but still sublinearly. Lately, \cite{giryes2018tradeoffs} tried to
relate LISTA to a projected gradient descent descent (PGD) relying on inaccurate
projections, where a trade-off between approximation error and convergence speed
was made possible.

\cite{xin2016maximal} investigated the convergence property of a sibling
architecture to LISTA, proposed in \cite{wang2016learning}, which was obtained
by instead unfolding/truncating the iterative hard thresholding (IHT) algorithm
rather than ISTA. The authors argued that they can use data to train a
transformation of dictionary that can improve its restricted isometry property
(RIP) constant, when the original dictionary is highly correlated, causing IHT
to fail easily. They moreover showed it beneficial to allow the weights to
decouple across layers. However, the analysis in \cite{xin2016maximal} cannot be
straightforwardly extended to ISTA although IHT is linearly convergent
\cite{blumensath2009iterative} under rather strong assumptions.

In \cite{borgerding2017amp}, a similar learning-based model inspired by another
iterative algorithm solve LASSO, approximated message passing (AMP), was
studied. The idea was advanced in \cite{metzler2017learned} to substituting the
AMP proximal operator (soft-thresholding) with a learnable Gaussian denoiser.
The resulting model, called Learned Denoising AMP (L-DAMP), has theoretical
guarantees under the asymptotic assumption named ``state evolution.'' While the
assumption is common in analyzing AMP algorithms, the tool is not directly
applicable to ISTA. Moreover, \cite{borgerding2017amp} shows L-DAMP is MMSE
optimal,  but there is no result on its convergence rate. Besides, we also note
the empirical effort in \cite{borgerding2016onsager} that introduces an Onsager
correction to LISTA to make it resemble AMP.

\vspace{-0.5em}
\subsection{Motivations and Contributions}
\vspace{-0.5em}
We attempt to answer the following questions, which are not fully addressed in
the literature yet:
\begin{itemize}
\vspace{-0.5em}
\itemsep -1pt
\item Rather than training LISTA as a conventional ``black-box'' network, can we
  benefit from exploiting certain dependencies among its parameters
  $\{(W_1^k,W_2^k,\theta^k)\}_{k=0}^{K-1}$ to simplify the network and improve
  the recovery results?
\item Obtained with sufficiently many training samples from the target
  distribution $\mathcal{P}(x,b)$, LISTA works very well. So, we wonder if there
  is a theoretical guarantee to ensure that LISTA (\ref{eq:gen_ista}) converges
  \footnote{The convergence of ISTA/FISTA measures how fast the $k$-th iterate
    proceeds; the convergence of LISTA measures how fast the output of the
    $k$-th layer proceeds as $k$ increases.}
  faster and/or produces a better solution than ISTA (\ref{eq:ista}) when its
  parameters are ideal? If the answer is affirmative, can we quantize the
  amount of acceleration?
\item Can some of the acceleration techniques such as support detection that
  were developed for LASSO also be used to improve LISTA?
\vspace{-0.2em}
\end{itemize}

\textbf{Our Contributions:} 
this paper aims to introduce more theoretical insights for LISTA and to further
unleash its power. To our best knowledge, this is the first attempt to establish
a theoretical convergence rate (upper bound) of LISTA directly. We also observe
that the \textit{weight structure} and the \textit{thresholds} can speedup the
convergence of LISTA:
\begin{itemize}
\vspace{-0.5em}
\itemsep -1pt
\item We give a result on asymptotic coupling between the weight matrices $W_1^k$ and $W_2^k$. This result leads us to eliminating one of them, thus reducing the number of trainable parameters. This elimination still retains the  theoretical and experimental performance of LISTA.
\item ISTA is generally sublinearly convergent before its iterates settle on a
  support. We prove that, however, there exists a sequence of parameters that
  makes LISTA converge linearly since its first iteration. Our numerical
  experiments support this theoretical result.
\item Furthermore, we introduce a thresholding scheme for \textit{support
  selection}, which is extremely simple to implement and significantly boosts
  the practical convergence. The linear convergence results are extended to
  support detection with an improved rate.
\vspace{-0.5em}
\end{itemize}
Detailed discussions of the above three points will follow after Theorems
\ref{prop:necessary}, \ref{prop:no_ss} and \ref{prop:ss}, respectively. Our
proofs do not rely on any indirect resemblance, e.g., to AMP
\cite{borgerding2016onsager} or PGD \cite{giryes2018tradeoffs}. The theories are
supported by extensive simulation experiments, and substantial performance
improvements are observed when applying the weight coupling and support
selection schemes. We also evaluated LISTA equipped with those proposed
techniques in an image compressive sensing task, obtaining superior performance
over several of the state-of-the-arts.

\vspace{-0.5em}
\section{Algorithm Description}
\label{sec:algo}
\vspace{-0.5em}
We first establish the necessary condition for LISTA convergence, which implies
a partial weight coupling structure for training LISTA. We then describe the
support-selection technique.

\vspace{-0.5em}
\subsection{Necessary Condition for LISTA Convergence and Partial Weight Coupling}
\label{sec:cp}
\vspace{-0.5em}

\begin{assume}[Basic assumptions]
\label{assume:basic}
  The signal $x^\ast$ and the observation noise $\varepsilon$ are sampled from the
  following set:
  \begin{equation}
    \label{eq:x_assume}
    (x^*,\varepsilon) \in \X(B,s,\sigma) \triangleq  \Big\{(x^*,\varepsilon) \Big| |x^\ast_i | \leq B, \forall i, ~\|x^\ast\|_0 \leq s, \|\varepsilon\|_1 \leq \sigma \Big\}.
  \end{equation}
  In other words, $x^\ast$ is bounded and $s$-sparse\footnote{A signal is
  $s$-sparse if it has no more than $s$ non-zero entries.} ($s \geq 2$), and
  $\varepsilon$ is bounded.
\end{assume}

\begin{theorem}[Necessary Condition]
\label{prop:necessary}
  Given $\{W^k_1,W^k_2,\theta^k\}_{k=0}^{\infty}$ and $x^0=0$, let $b$ be
  observed by (\ref{eq:linear_model}) and $\{x^k\}_{k=1}^{\infty}$ be generated
  layer-wise by LISTA~(\ref{eq:gen_ista}). If the following holds uniformly for
  any $(x^*,\varepsilon) \in \X(B,s,0)$ (no observation noise):
  \[x^k\Big( \{W_1^\tau, W_2^\tau, \theta^\tau \}_{\tau=0}^{k-1},b,x^0 \Big) \to x^*,\quad \text{as }k \to \infty\]
  and $\{W^k_2\}_{k=1}^{\infty}$ are bounded
  \[\|W^k_2\|_2\leq B_W,\quad \forall k = 0,1,2,\cdots, \]
  then $\{W^k_1,W^k_2,\theta^k\}_{k=0}^{\infty}$ must satisfy
  \begin{align}
    &W^k_2 -( I - W^k_1A)\to 0, \quad\text{as }k\to\infty \label{eq:couple_way} \\
    &\theta^k\to 0,\quad\text{as }k\to\infty\label{eq:theta_to_0}.
  \end{align}
\end{theorem}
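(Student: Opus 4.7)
The plan is to exploit that, with $\varepsilon = 0$ and thus $b = A x^*$, the LISTA recursion factors as
\[
x^{k+1} \;=\; \eta_{\theta^k}\!\bigl((I + D^k)\, x^* + W_2^k(x^k - x^*)\bigr),
\qquad D^k := W_1^k A + W_2^k - I,
\]
so (\ref{eq:couple_way}) is literally $D^k \to 0$. The assumed bound $\|W_2^k\|_2 \le B_W$ together with $x^k \to x^*$ means $\xi^k := W_2^k(x^k - x^*)$ satisfies $\|\xi^k\|_2 \le B_W \|x^k - x^*\|_2 \to 0$, so we treat it as a vanishing perturbation throughout.

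The main device is the elementary inversion of soft-thresholding: if $x = \eta_\theta(y)$ and $x \ne 0$, then $y = x + \theta\,\mathrm{sign}(x)$ exactly, and in particular $|y| \le |x| + |\theta|$ always. I would first apply this to the $1$-sparse test signal $x^* = t e_i$ with $t \in (0, B]$ (admissible since $s \ge 2$). Because $x^{k+1}_i \to t > 0$ forces $x^{k+1}_i \ne 0$ for all large $k$, the inversion gives
\[
\theta^k \;=\; y^k_i - x^{k+1}_i \;=\; t(1 + D^k_{ii}) + \xi^k_i - x^{k+1}_i \;=\; t\, D^k_{ii} + o(1),
\]
as $k \to \infty$. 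Running the same computation with two magnitudes $t_1 \ne t_2$ in $(0,B]$ and subtracting yields $(t_1 - t_2)\, D^k_{ii} \to 0$, hence $D^k_{ii} \to 0$; plugging this back gives $\theta^k \to 0$, which is (\ref{eq:theta_to_0}).

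For the off-diagonal entries, I would use the off-support coordinates $j \ne i$ of the test signal $x^* = B e_i$. Here $x^{k+1}_j \to 0$, and the bound $|y^k_j| \le |x^{k+1}_j| + |\theta^k|$ combined with the $\theta^k \to 0$ just established shows $y^k_j \to 0$. Since also $y^k_j = B\, D^k_{ji} + \xi^k_j$, this forces $D^k_{ji} \to 0$. Letting $i$ range over $\{1,\dots,n\}$ shows every column of $D^k$ vanishes, so $D^k \to 0$ in any matrix norm; this is (\ref{eq:couple_way}).

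The main obstacle is purely bookkeeping: the $o(1)$ quantities depend on the chosen $(t,i,j)$, so I must fix $t_1, t_2$ (and later $i, j$) \emph{before} letting $k \to \infty$, rather than attempting any uniform-in-$x^*$ statement. Within this ordering the argument is entirely pointwise — no compactness or subsequence extraction, and no a priori bounds on $\theta^k$ or $W_1^k$ — because the soft-thresholding inversion is an exact equality the moment $x^{k+1}_i \ne 0$, which is guaranteed as soon as $x^k_i$ enters a neighbourhood of $t > 0$.
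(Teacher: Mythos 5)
Your proof is correct, and while it shares the paper's two central devices --- inverting the soft-thresholding exactly on coordinates where the limit is nonzero, and comparing two scalings of the same signal to decouple $\theta^k$ from the linear term (your $t_1\neq t_2$ plays the role of the paper's $x^*$ versus $2x^*$) --- the execution is genuinely leaner. The paper works with arbitrary supports $S$, establishes uniform sign agreement over a sub-family of $\X(B,s,0)$ with entries bounded below, passes through a subgradient inclusion for $\partial\ell_1$, and concludes by bounding the operator norm of each block $I-W^k_2(S,S)-W^k_1A(S,S)$ via a supremum over the signal class; it then needs $s\geq 2$ so that every off-diagonal entry of $I-W^k_2-W^k_1A$ is captured by some $2\times 2$ block. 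You instead probe with $1$-sparse signals $t e_i$ and read off each entry of $D^k=W_1^kA+W_2^k-I$ directly: the diagonal from the exact identity $\theta^k=y^k_i-x^{k+1}_i$ on the surviving coordinate, and the off-diagonal from the elementary bound $|y^k_j|\leq|x^{k+1}_j|+\theta^k$ on the off-support coordinates, which is where the paper's subgradient step is replaced by something purely scalar. This buys you a strictly pointwise argument (only finitely many test signals, no uniformity over $\X$ needed beyond convergence at those signals) and, incidentally, removes the need for $s\geq 2$; what the paper's formulation buys in exchange is the quantitative, uniform-in-$S$ operator-norm statement that its convergence analysis in later sections is phrased around. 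One small bookkeeping point you implicitly rely on and should state: the exact inversion $x^{k+1}_i=y^k_i-\theta^k$ uses $\mathrm{sign}(x^{k+1}_i)=+1$, which holds for all large $k$ because $x^{k+1}_i\to t>0$; you say this, so the argument is complete.
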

Proofs of the results throughout this paper can be found in the supplementary.
The conclusion (\ref{eq:couple_way}) demonstrates that the weights
$\{W^k_1,W^k_2\}_{k=0}^{\infty}$ in LISTA asymptotically satisfies the following
partial weight coupling structure:
\begin{equation}
  \label{eq:wcp}
  W^k_2 = I - W^k_1A.
\end{equation}
We adopt the above partial weight coupling for all layers, letting $W^k = (W^k_1)^T \in \Re^{m\times n}$, thus simplifying LISTA (\ref{eq:gen_ista}) to:
\begin{equation}
\label{eq:lista_cp}
  x^{k+1}=\eta_{\theta^k}\Big(x^k + (W^k)^\top (b - Ax^k)\Big),\quad k = 0,1,\cdots,K-1,
\end{equation}
where $\{W^k,\theta^k\}_{k=0}^{K-1}$ remain as free parameters to train. Empirical results in Fig. \ref{fig:coupleway} illustrate that the structure (\ref{eq:wcp}), though having fewer parameters, improves the performance of LISTA.

The coupled structure (\ref{eq:wcp}) for soft-thresholding based algorithms was empirically studied in \cite{borgerding2017amp}. The similar structure was also theoretically studied in Proposition 1 of \cite{xin2016maximal} for IHT algorithms using the fixed-point theory, but they let all layers share the same weights, i.e. $W^k_2 = W_2, W^k_1 = W_1,\forall k$.

\vspace{-1em}
\subsection{LISTA with Support Selection}
\vspace{-0.5em}
\label{sec:ss}
We introduce a special thresholding scheme to LISTA, called \textit{support
selection}, which is inspired by ``kicking'' \cite{osher2011fast} in linearized
Bregman iteration. This technique shows advantages on recoverability and
convergence.  Its impact on improving LISTA convergence rate and reducing
recovery errors will be analyzed in Section \ref{sec:convergence}.   With
support selection, at each LISTA layer \textit{before} applying soft
thresholding, we will select a certain percentage of entries with largest
magnitudes, and trust them as ``true support'' and won’t pass them through
thresholding.  Those entries that do not go through thresholding will be
directly fed into next layer, together with other thresholded entires.

Assume we select $p^k\%$ of entries as the trusted support at layer $k$. LISTA
with support selection can be generally formulated as
\vspace{-0.6em}
\begin{equation}
  \label{eq:lista_ss0}
  x^{k+1} = {\eta_\mathrm{ss}}_{\theta^k}^{p^k} \Big(W^k_1 b + W^k_2 x^k\Big), \quad k = 0,1,\cdots,K-1,
\end{equation}
where ${\eta_{ss}}$ is the thresholding operator with support selection,
formally defined as:
\[
({\eta_\mathrm{ss}}_{\theta^k}^{p^k}(v))_i = \left\{
  \begin{array}{lll}
    v_i & : v_i > \theta^k,& i\in S^{p^k}(v), \\
    v_i - \theta^k & : v_i > \theta^k,& i\notin S^{p^k}(v), \\
    0 & : -\theta^k \leq v_i \leq \theta^k &\\
    v_i + \theta^k & : v_i < -\theta^k,& i\notin S^{p^k}(v), \\
    v_i & : v_i < -\theta^k, &i\in S^{p^k}(v),
  \end{array}
\right.
\]
where $S^{p^k}(v)$ includes the elements with the largest $p^k\%$ magnitudes in
vector $v$:
\vspace{-0.6em}
\begin{equation}
\label{eq:spk}
S^{p^k}(v) = \Big\{i_1,i_2,\cdots,i_{p^k}\Big||v_{i_1}| \geq |v_{i_2}| \geq \cdots |v_{i_{p^k}}|\cdots \geq |v_{i_n}|\Big\}.
\end{equation}
To clarify, in (\ref{eq:lista_ss0}), $p^k$ is a hyperparameter to be manually tuned, and $\theta^k$ is a parameter to train. We use an empirical formula to select $p^k$ for layer $k$: $p^k = \min(p\cdot k, p_\mathrm{max})$, where $p$ is a positive constant and $p_\mathrm{max}$ is an upper bound of the percentage of the support cardinality. Here $p$ and $p_\mathrm{max}$ are both hyperparameters to be manually tuned.

If we adopt the partial weight coupling in (\ref{eq:wcp}), then (\ref{eq:lista_ss0}) is modified as
\begin{equation}
\label{eq:lista_ss}
  x^{k+1} = {\eta_\mathrm{ss}}_{\theta^k}^{p^k} \Big(x^k + (W^k)^T (b - Ax^k)\Big),\quad k = 0,1,\cdots,K-1.
\end{equation}

\paragraph{Algorithm abbreviations} For simplicity, hereinafter we will use the
abbreviation ``CP'' for the partial weight coupling in ~(\ref{eq:wcp}), and
``SS'' for the support selection technique. \textit{LISTA-CP} denotes the LISTA
model with weights coupling (\ref{eq:lista_cp}). \textit{LISTA-SS} denotes the
LISTA model with support selection (\ref{eq:lista_ss0}). Similarly,
\textit{LISTA-CPSS} stands for a model using both techniques
(\ref{eq:lista_ss}), which has the best performance. Unless otherwise specified,
\textit{LISTA} refers to the baseline LISTA (\ref{eq:gen_ista}).

\vspace{-1em}
\section{Convergence Analysis}
\label{sec:convergence}
\vspace{-0.5em}

In this section, we formally establish the impacts of (\ref{eq:lista_cp}) and
(\ref{eq:lista_ss}) on LISTA's convergence. The output of the $k^{\text{th}}$
layer $x^k$ depends on the parameters $\{W^\tau,\theta^\tau\}_{\tau=0}^{k-1}$,
the observed measurement $b$ and the initial point $x^0$. Strictly speaking,
$x^k$ should be written as
$x^k\Big( \{W^\tau, \theta^\tau \}_{\tau=0}^{k-1},b, x^0 \Big)$. By the
observation model $b=Ax^*+\varepsilon$, since $A$ is given and $x^0$ can be
taken as $0$, $x^k$ therefore depends on $\{(W^\tau,\theta^\tau)\}_{\tau=0}^k$,
$x^*$ and $\varepsilon$. So, we can write
$x^k\Big( \{W^\tau, \theta^\tau \}_{\tau=0}^{k-1},x^*,\varepsilon \Big)$.
For simplicity, we instead just write $x^k(x^*,\varepsilon)$.
\begin{theorem}[Convergence of LISTA-CP]
  \label{prop:no_ss}
  Given $\{W^k,\theta^k\}_{k=0}^{\infty}$ and $x^0=0$, let $\{x^k\}_{k=1}^{\infty}$ be generated by (\ref{eq:lista_cp}). If Assumption \ref{assume:basic} holds and $s$ is sufficiently small,
  then there exists a sequence of parameters $\{W^k,\theta^k\}$ such that, for all $(x^*,\varepsilon)\in \X(B,s,\sigma)$,
  we have the error bound:
  \begin{equation}
    \label{eq:linear_conv}
    \|x^k(x^*,\varepsilon)-x^\ast\|_2 \leq s B \exp(-ck) + C\sigma,\quad \forall k = 1,2,\cdots,
  \end{equation}
  where $c>0,C>0$ are constants that depend only on $A$ and $s$. Recall $s$
  (sparsity of the signals) and $\sigma$ (noise-level) are defined in
  (\ref{eq:x_assume}).
\end{theorem}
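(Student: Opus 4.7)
The plan is to exploit the partial coupling (\ref{eq:wcp}) to rewrite each iterate as a perturbation of $x^*$, and then show by induction on $k$ that $\{x^k\}$ keeps its support inside $S := \operatorname{supp}(x^*)$ while its error contracts geometrically in $\ell_1$. Substituting $b = A x^* + \varepsilon$ into (\ref{eq:lista_cp}) gives the identity
\[
x^{k+1} = \eta_{\theta^k}\bigl(x^* - (I - (W^k)^\top A)(x^* - x^k) + (W^k)^\top \varepsilon\bigr),
\]
so the error enters only through the off-diagonal action of $(W^k)^\top A$ and the noise term $(W^k)^\top \varepsilon$. This motivates introducing a ``generalized mutual coherence''
\[
\tilde\mu(A) = \inf_{W \in \mathbb{R}^{m \times n}} \Bigl\{\, \max_{i \neq j} |(W^\top A)_{ij}| \;:\; (W^\top A)_{ii} = 1,\ \forall i \,\Bigr\},
\]
picking every $W^k$ to attain (or approach) $\tilde\mu$, and choosing the threshold
\[
\theta^k = \tilde\mu\, R_k + C_W \sigma,
\]
where $R_k$ is a deterministic upper bound on $\|x^k - x^*\|_1$ produced by the induction, and $C_W$ is a constant that bounds $\max_i |((W^k)^\top v)_i|$ in terms of $\|v\|_1$.

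The heart of the argument is a coordinate-wise induction. Assuming $\operatorname{supp}(x^k) \subset S$ and $\|x^k - x^*\|_1 \leq R_k$, observe that for every index $i$ the construction $((W^k)^\top A)_{ii} = 1$ kills the diagonal term, so the perturbation of $x^*_i$ appearing inside $\eta_{\theta^k}$ has magnitude at most $\tilde\mu\, \|x^k - x^*\|_1 + C_W \sigma \leq \theta^k$. Hence for $i \notin S$ (where $x^*_i = 0$) the argument sits in $[-\theta^k, \theta^k]$ and is thresholded to $0$, propagating the support inclusion. For $i \in S$, a direct case split on the soft-thresholding operator gives the uniform bound $|\eta_\theta(x^*_i + \delta_i) - x^*_i| \leq |\delta_i| + \theta^k$, so
\[
|x^{k+1}_i - x^*_i| \leq 2 \tilde\mu\, \|x^k - x^*\|_1 + 2 C_W \sigma.
\]
Summing over the at-most-$s$ indices in $S$ and using induction,
\[
\|x^{k+1} - x^*\|_1 \leq 2 s \tilde\mu\, \|x^k - x^*\|_1 + 2 s C_W \sigma,
\]
so setting $R_{k+1} = 2 s \tilde\mu\, R_k + 2 s C_W \sigma$ closes the induction. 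When $2 s \tilde\mu < 1$ this recurrence iterates from $R_0 = \|x^*\|_1 \leq sB$ to $R_k \leq (2 s \tilde\mu)^k sB + 2 s C_W \sigma / (1 - 2 s \tilde\mu)$, and the $s$-sparsity of $x^k - x^*$ converts the $\ell_1$ bound to the stated $\ell_2$ bound with $c = -\log(2 s \tilde\mu) > 0$ and $C$ depending only on $A$ and $s$.

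The ``$s$ sufficiently small'' hypothesis thus reduces cleanly to the single inequality $2 s \tilde\mu(A) < 1$. I expect the main obstacle to be the well-posedness of $\tilde\mu$: one must show that the feasibility set $\{W : (W^\top A)_{ii} = 1\ \forall i\}$ is nonempty and that the infimum is attained by some $W$ with a controllable operator norm, so that the constant $C_W$ exists and stays uniform in $k$; this likely requires using that $A$ has no zero columns plus a short compactness/coercivity argument. A second subtlety is that the theorem demands a single sequence $\{W^k, \theta^k\}$ uniformly valid over all $(x^*, \varepsilon) \in \X(B, s, \sigma)$, which is precisely why $\theta^k$ must be defined via the deterministic $R_k$ rather than the data-dependent quantity $\|x^k - x^*\|_1$; verifying that this data-independent choice still forces the no-false-positive property at every layer is the place where the induction must be executed with care.
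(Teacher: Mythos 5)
Your proposal is correct and follows essentially the same route as the paper's proof: the same generalized mutual coherence $\tilde\mu$, the same LP-attainment issue (handled in the paper by a feasible-and-bounded linear-programming lemma), the same data-uniform threshold choice forcing no false positives, and the same $\ell_1$ contraction converted to $\ell_2$ via $\|\cdot\|_2 \le \|\cdot\|_1$; your deterministic recursion $R_k$ is just the paper's $\sup_{(x^*,\varepsilon)}\|x^k - x^*\|_1$ in different clothing. The only (immaterial) difference is that by not excluding the $j=i$ term when summing over $S$ you get the contraction factor $2s\tilde\mu$ rather than the paper's slightly sharper $(2s-1)\tilde\mu$, which marginally tightens the "sufficiently small $s$" requirement but does not affect the validity of the argument.
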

If $\sigma=0$ (noiseless case), (\ref{eq:linear_conv}) reduces to
\vspace{-1mm}
\begin{equation}
\label{eq:linear_noiseless}
\|x^k(x^*,0)-x^\ast\|_2 \leq s B \exp(-ck).
\end{equation}
The recovery error converges to $0$ at a linear rate as the number of layers
goes to infinity. Combined with Theorem \ref{prop:necessary}, we see that the
partial weight coupling structure (\ref{eq:lista_cp}) is both necessary and
sufficient to guarantee convergence in the noiseless case. Fig.
\ref{fig:coupleway} validates (\ref{eq:linear_conv}) and
(\ref{eq:linear_noiseless}) directly.

\textbf{Discussion:} The bound (\ref{eq:linear_noiseless}) also explains why LISTA (or its variants) can converge faster than ISTA and fast ISTA (FISTA) \cite{beck2009fast}. With a proper $\lambda$ (see (\ref{eq:lasso})), ISTA converges at an $O(1/k)$ rate and FISTA converges at an $O(1/k^2)$ rate~\cite{beck2009fast}. With a large enough $\lambda$,
ISTA achieves a linear rate \cite{bredies2008linear,zhang2017new}. With $\bar{x}(\lambda)$ being the solution of LASSO (noiseless case), these results can be summarized as: before the iterates $x^k$ settle on a support\footnote{After $x^k$ settles on a support, i.e.  as $k$ large enough such that $\mathrm{support}(x^{k})$ is fixed, even with small $\lambda$, ISTA reduces to a linear iteration, which has a linear convergence rate \cite{tao2016local}. 
},
\vspace{-1mm}
\[\begin{aligned}
x^k \to \bar{x}(\lambda) \text{ sublinearly},&\quad \|\bar{x}(\lambda) - x^*\| = O(\lambda),\quad \lambda > 0\\
x^k \to \bar{x}(\lambda) \text{ linearly},&\quad \|\bar{x}(\lambda) - x^*\| = O(\lambda),\quad \text{$\lambda$ large enough}.
\end{aligned}\]
Based on the choice of $\lambda$ in LASSO, the above observation reflects an inherent trade-off between convergence rate and approximation accuracy in solving the problem (\ref{eq:linear_model}), see a similar conclusion in \cite{giryes2018tradeoffs}: a larger $\lambda$ leads to faster convergence but a less accurate solution, and vice versa.

However, if $\lambda$ is not constant throughout all iterations/layers, but
instead chosen adaptively for each step, more promising trade-off can
arise\footnote{This point was studied in
\cite{HaleYinZhang2008_sparse,xiao2013proximal} with classical compressive
sensing settings, while our learning settings can learn a good path of
parameters without a complicated thresholding rule  or any manual tuning.}.
LISTA and LISTA-CP, with the thresholds $\{\theta^k\}_{k=0}^{K-1}$ free to
train, actually adopt this idea because $\{\theta^k\}_{k=0}^{K-1}$ corresponds
to a path of LASSO parameters $\{\lambda^k\}_{k=0}^{K-1}$. With extra free
trainable parameters, $\{W^k\}_{k=0}^{K-1}$ (LISTA-CP) or $\{W^k_1,
W^k_2\}_{k=0}^{K-1}$ (LISTA), learning based algorithms are able to converge to
an accurate solution at a fast convergence rate. Theorem \ref{prop:no_ss}
demonstrates the existence of such sequence $\{W^k,\theta^k\}_k$ in LISTA-CP
(\ref{eq:lista_cp}).
The experiment results in Fig. \ref{fig:nmse_ista_lista} show that such
$\{W^k,\theta^k\}_k$ can be obtained by training.
\begin{assume}
  \label{assume:basic2}
  Signal $x^\ast$ and observation noise $\varepsilon$ are sampled from the following set:
  \begin{equation}
    \label{eq:x_assume2}
    (x^*,\varepsilon) \in \bar{\X}(B,\underline{B}, s,\sigma) \triangleq  \Big\{(x^*,\varepsilon) \Big| |x^\ast_i | \leq B, \forall i, ~\|x^\ast \|_1 \geq \underline{B} , \|x^\ast\|_0 \leq s, \|\varepsilon\|_1 \leq \sigma \Big\}.
  \end{equation}
\end{assume}

\begin{theorem}[Convergence of LISTA-CPSS]
  \label{prop:ss}
  Given $\{W^k,\theta^k\}_{k=0}^{\infty}$ and $x^0=0$, let
  $\{x^k\}_{k=1}^{\infty}$ be generated by ~(\ref{eq:lista_ss}).  With the same
  assumption and parameters as in Theorem \ref{prop:no_ss}, the approximation
  error can be bounded for all $(x^*,\varepsilon)\in \X(B,s,\sigma)$:
  \vspace{-2mm}
  \begin{equation}
    \label{eq:linear_ss}
    \|x^k(x^*,\varepsilon) -x^\ast\|_2 \leq s B \exp\Big(-\sum_{t=0}^{k-1}c_{\mathrm{ss}}^t\Big) + C_{\mathrm{ss}}\sigma, \quad \forall k = 1,2,\cdots,
  \end{equation}
  where $c_{\mathrm{ss}}^k\geq c$ for all $k$ and $C_{\mathrm{ss}} \leq C$.

  If Assumption \ref{assume:basic2} holds, $s$ is small enough, and
  $\underline{B} \geq 2C\sigma$ (SNR is not too small), then there exists
  another sequence of parameters $\{\tilde{W}^k,\tilde{\theta}^k\}$ that yields
  the following improved error bound: for all $(x^*,\varepsilon) \in
  \bar{\X}(B,\underline{B}, s,\sigma)$,
  \vspace{-2mm}
  \begin{equation}
    \label{eq:linear_ss2}
    \|x^k(x^*,\varepsilon) -x^\ast\|_2 \leq s B \exp\Big(-\sum_{t=0}^{k-1} \tilde{c}_{\mathrm{ss}}^t\Big) + \tilde{C}_{\mathrm{ss}}\sigma, \quad \forall k = 1,2,\cdots,
  \end{equation}
  where $\tilde{c}_{\mathrm{ss}}^k\geq c$ for all $k$,
  $\tilde{c}_{\mathrm{ss}}^k > c$ for large enough $k$,  and
  $\tilde{C}_{\mathrm{ss}}< C$.
\end{theorem}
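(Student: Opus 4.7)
My plan is to view Theorem \ref{prop:ss} as a coordinatewise refinement of Theorem \ref{prop:no_ss}. The support-selection operator $\eta_\mathrm{ss}^{p^k}$ agrees with the plain soft-threshold $\eta_{\theta^k}$ except on coordinates where $|v_i|>\theta^k$: there, a selected coordinate keeps its value $v_i$ while an unselected one is shrunk to $v_i - \mathrm{sign}(v_i)\theta^k$. So the strategy is to track how this selective removal of shrinkage bias propagates through the error recursion already established in the proof of Theorem \ref{prop:no_ss}. For Part 1, I would reuse the parameter sequence $\{W^k,\theta^k\}$ guaranteed by Theorem \ref{prop:no_ss}; for Part 2, I would construct a strictly better sequence $\{\tilde W^k,\tilde\theta^k\}$ that leverages the correctly identified support to use smaller thresholds.

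For Part 1, I would first extract from the proof of Theorem \ref{prop:no_ss} the ``no-false-support'' invariant: the pre-activation $v^k = x^k + (W^k)^\top(b-Ax^k)$ satisfies $|v_i^k|\leq \theta^k$ for every $i\notin S := \mathrm{supp}(x^*)$, so $\mathrm{supp}(x^k)\subseteq S$ for all $k$. The crucial consequence is that every coordinate with $|v_i^k|>\theta^k$ lies in $S$, while every coordinate with $|v_i^k|\leq\theta^k$ is zeroed by $\eta_\mathrm{ss}$ exactly as by $\eta$. Hence support selection removes a shrinkage bias of magnitude $\theta^k$ only from on-support coordinates and never introduces any false positives, regardless of how $p^k$ is chosen. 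Substituting this coordinatewise reduction into the $\ell_\infty$ contraction derived in the proof of Theorem \ref{prop:no_ss} gives a per-layer rate $c_\mathrm{ss}^k\geq c$, with strict inequality whenever at least one on-support coordinate exceeds the current threshold, together with a noise coefficient $C_\mathrm{ss}\leq C$ obtained by the same bookkeeping of how $\theta^k$-sized terms aggregate.

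For Part 2, Assumption \ref{assume:basic2} together with $\underline{B}\geq 2C\sigma$ ensures that after a finite burn-in the iterate satisfies $\|x^k-x^*\|_\infty < \underline{B}/s$, so at least one on-support coordinate of $v^k$ consistently sits well above any threshold needed to suppress the off-support residual. This lets me construct $\{\tilde W^k,\tilde\theta^k\}$ that agrees with $\{W^k,\theta^k\}$ during the burn-in and then uses strictly smaller $\tilde\theta^k$ afterwards: since the selected coordinates now carry the true-support information directly, $\tilde\theta^k$ only needs to dominate the off-support residual of $v^k$, which itself contracts at the linear rate established in Part 1. A smaller $\tilde\theta^k$ shrinks both the per-step bias (raising $\tilde c_\mathrm{ss}^k$ strictly above $c$ for large $k$) and the accumulated noise contribution (giving $\tilde C_\mathrm{ss}<C$). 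Multiplying the per-layer contractions produces the $\exp(-\sum_t \tilde c_\mathrm{ss}^t)$ form claimed in (\ref{eq:linear_ss2}).

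The main obstacle I anticipate is maintaining the ``no-false-support'' invariant in the noisy setting while simultaneously shrinking $\tilde\theta^k$: reducing the threshold risks admitting noise-driven coordinates outside $S$ into $\mathrm{supp}(x^{k+1})$, so the allowable decrease of $\tilde\theta^k$ is tied to how fast $\max_{i\notin S}|v_i^k|$ decays, which in turn depends delicately on the incoherence/RIP-type hypothesis on $A$ inherited from Theorem \ref{prop:no_ss}. A secondary difficulty is making the gain $\tilde c_\mathrm{ss}^k-c$ quantitative rather than merely positive, since it depends on the number of genuinely selected support coordinates at layer $k$, and hence on the interplay between the schedule $p^k = \min(pk, p_\mathrm{max})$ and the rate at which $\|x^k-x^*\|_\infty$ passes below the on-support magnitudes.
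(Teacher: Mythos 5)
Your Part 1 is essentially the paper's argument: keep the parameters constructed for Theorem \ref{prop:no_ss}, note that the no-false-positive invariant survives verbatim (because ${\eta_\mathrm{ss}}$ still zeroes every coordinate with $|v_i|\leq\theta^k$), and observe that selected on-support coordinates escape the $\theta^k$ shrinkage bias. Two small corrections there: the contraction in the paper is run in $\ell_1$, not $\ell_\infty$, and a coordinate escapes shrinkage only if it is \emph{additionally} among the top $p^k\%$ magnitudes, so the per-layer gain is quantified by $|S^k(x^*,\varepsilon)|=|\{i\in S:\ x^{k+1}_i\neq 0,\ i\in S^{p^k}(x^{k+1})\}|$ and $c^k_{\mathrm{ss}}=-\log\big(2\tilde\mu s-\tilde\mu-\tilde\mu\inf|S^k|\big)$; ``at least one on-support coordinate exceeds the current threshold'' is not by itself the right condition.

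The genuine gap is in Part 2. You propose to obtain the strict improvement by shrinking $\tilde\theta^k$ after a burn-in, but the threshold from Theorem \ref{prop:no_ss}, $\theta^k=\tilde\mu\sup\|x^k-x^*\|_1+C_W\sigma$, is already exactly the worst-case bound on the off-support pre-activation $\big|-\sum_{j\in S}(W^k_i)^TA_j(x^k_j-x^*_j)+(W^k_i)^T\varepsilon\big|$; any strict reduction forfeits the no-false-positive invariant in the worst case, and the $C_W\sigma$ part cannot be reduced at all since the noise does not decay. You name this as your main obstacle but do not resolve it, and the paper's resolution is to not touch the threshold mechanism. It keeps the same formula (with the supremum taken over $\bar{\X}(B,\underline{B},s,\sigma)$) and instead proves the selection set is eventually nonempty: once $k>\max\big(\tfrac1p,\tfrac1c\log(sB/C\sigma)\big)$, the bound $sB\exp(-ck)+C\sigma<2C\sigma\leq\underline{B}\leq\|x^*\|_1$ forces $\|x^{k+1}-x^*\|_1<\|x^*\|_1$, hence $x^{k+1}\neq 0$, and since $p^k\geq 1$ this yields $|S^k|\geq 1$, i.e.\ $\tilde c^k_{\mathrm{ss}}>c$. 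The constant improvement $\tilde C_{\mathrm{ss}}<C$ then falls out of $\tilde C_{\mathrm{ss}}=2sC_W\sum_{k}\prod_{t\leq k}\exp(-\tilde c^t_{\mathrm{ss}})$: it comes from the faster accumulated contraction multiplying the injected $\theta^k$-bias terms, not from injecting less noise per step. This is precisely where Assumption \ref{assume:basic2} and $\underline{B}\geq 2C\sigma$ enter; your observation that the iterate eventually sits within $\underline{B}/s$ of $x^*$ in $\ell_\infty$ is in the right spirit, but it should be used to lower-bound $|S^k|$, not to license a smaller threshold.
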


The bound in (\ref{eq:linear_ss}) ensures that, with the same assumptions and
parameters, LISTA-CPSS is \textit{at least no worse} than LISTA-CP. The bound in
(\ref{eq:linear_ss2}) shows that, under stronger assumptions, LISTA-CPSS can be
\textit{strictly better} than LISTA-CP in both folds:
$\tilde{c}_{\mathrm{ss}}^k > c$ is the better convergence rate of LISTA-CPSS;
$\tilde{C}_{\mathrm{ss}}< C$ means that the LISTA-CPSS can achieve smaller
approximation error than the minimum error that LISTA can achieve.

\vspace{-1em}
\section{Numerical Results}
\vspace{-0.5em}
For all the models reported in this section, including the baseline LISTA and
LAMP models , we adopt a stage-wise training strategy with learning rate
decaying to stabilize the training and to get better performance, which is
discussed in the supplementary.

\subsection{Simulation Experiments}
\label{sec:simulation}
\vspace{-0.5em}
\textbf{Experiments Setting.}
We choose $m=250, n=500$. We sample the entries of $A$ i.i.d. from the standard Gaussian distribution,
$A_{ij} \sim N(0, 1/m)$ and then normalize its columns to have the unit $\ell_2$ norm. We fix a matrix $A$ in each setting where different networks are compared. To generate sparse vectors
$x^*$, we  decide each of its entry to be non-zero following the Bernoulli distribution with $p_b=0.1$.
The values of  the non-zero entries are sampled from the standard Gaussian distribution. A test set
of 1000 samples generated in the above manner is fixed for all tests in our simulations.

All the networks have $K=16$ layers. In LISTA
models with support selection, we add $p\%$ of
entries into support and maximally select $p_\mathrm{max}\%$ in each layer. We manually tune the value of $p$ and $p_\mathrm{max}$ for the best final performance. With $p_b = 0.1$ and $K=16$, we choose $p=1.2$ for all models in simulation experiments and $p_\mathrm{max}=12$ for LISTA-SS but $p_\mathrm{max}=13$ for LISTA-CPSS. 
The recovery performance is evaluated by NMSE (in dB):
\[
  \mathrm{NMSE}(\hat{x},x^\ast)= 10 \log_{10}
  \left(\frac{\mathbb{E}\|\hat{x}-x^\ast\|^2}{\mathbb{E}\|x^\ast\|^2}\right),
\]
where $x^\ast$ is the ground truth and $\hat{x}$ is the estimate obtained by the
recovery algorithms (ISTA, FISTA, LISTA, etc.).

\textbf{Validation of Theorem \ref{prop:necessary}.}
In Fig \ref{fig:all_free}, we report two values, $\|W^k_2 - (I - W^k_1A)\|_2$
and $\theta^k$, obtained by the baseline LISTA model (\ref{eq:gen_ista}) trained
under the noiseless setting.  The plot clearly demonstrates that
$W^k_2 \to I - W^k_1A,$ and $\theta^k\to 0$, as $k\to\infty.$ Theorem
\ref{prop:necessary} is directly validated.
\begin{figure}[ht]
  \centering
  \begin{tabular}{cc}
    \hspace{-7mm}
    \subfigure[Weight $W^k_2 \to I-W^k_1A$ as $k\to \infty$.]{
      \includegraphics[width=0.45\linewidth]{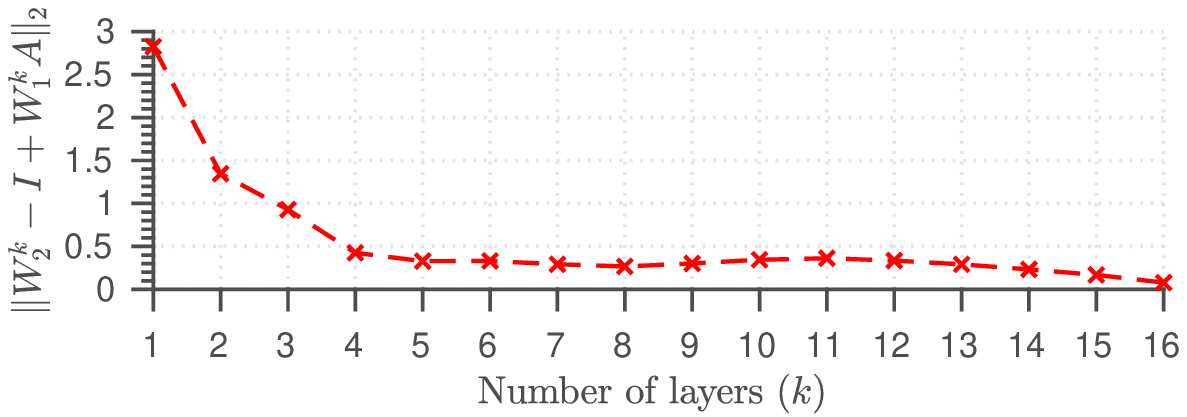}
    }
    &
    \subfigure[The threshold $\theta^k\to 0$.]{
      \includegraphics[width=0.45\linewidth]{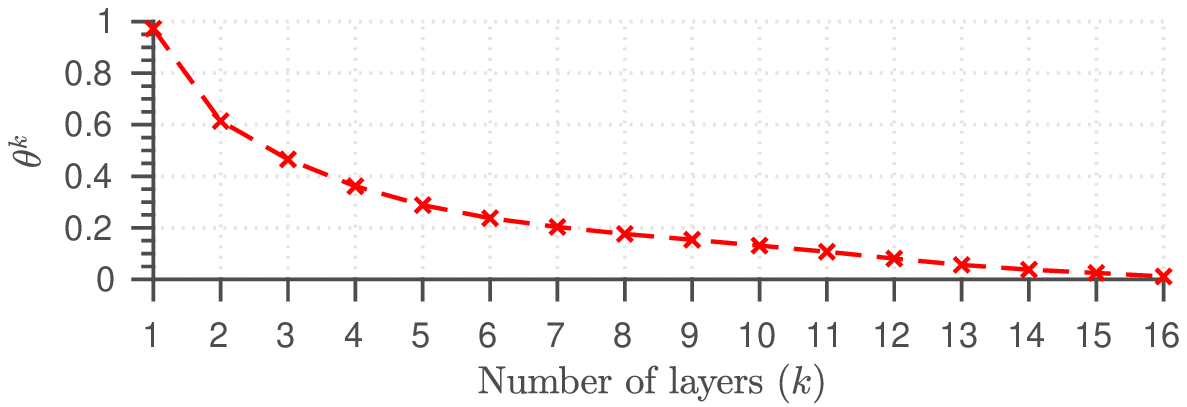}
    }
  \end{tabular}
  \caption{Validation of Theorem \ref{prop:necessary}.}
  \vspace{-1em}
  \label{fig:all_free}
\end{figure}

\textbf{Validation of Theorem \ref{prop:no_ss}.}
We report the test-set NMSE of LISTA-CP (\ref{eq:lista_cp}) in
Fig. \ref{fig:coupleway}. Although (\ref{eq:lista_cp}) fixes the structure
between $W_1^k$ and $W_2^k$, the final performance remains the same with the
baseline LISTA (\ref{eq:gen_ista}), and outperforms AMP, in both noiseless and
noisy cases. Moreover, the output of interior layers in LISTA-CP are even better
than the baseline LISTA. In the noiseless case, NMSE converges exponentially to
$0$; in the noisy case, NMSE converges to a stationary level related with the
noise-level. This supports Theorem \ref{prop:no_ss}: there indeed exist a
sequence of parameters $\{(W^k,\theta^k)\}_{k=0}^{K-1}$ leading to linear
convergence for LISTA-CP, and they can be obtained by data-driven learning.
\begin{figure}[ht]
  \centering
  \begin{tabular}{cc}
    \hspace{-7mm}
    \subfigure[$\ \mathrm{SNR}=\infty$]{
      \includegraphics[width=0.45\linewidth]{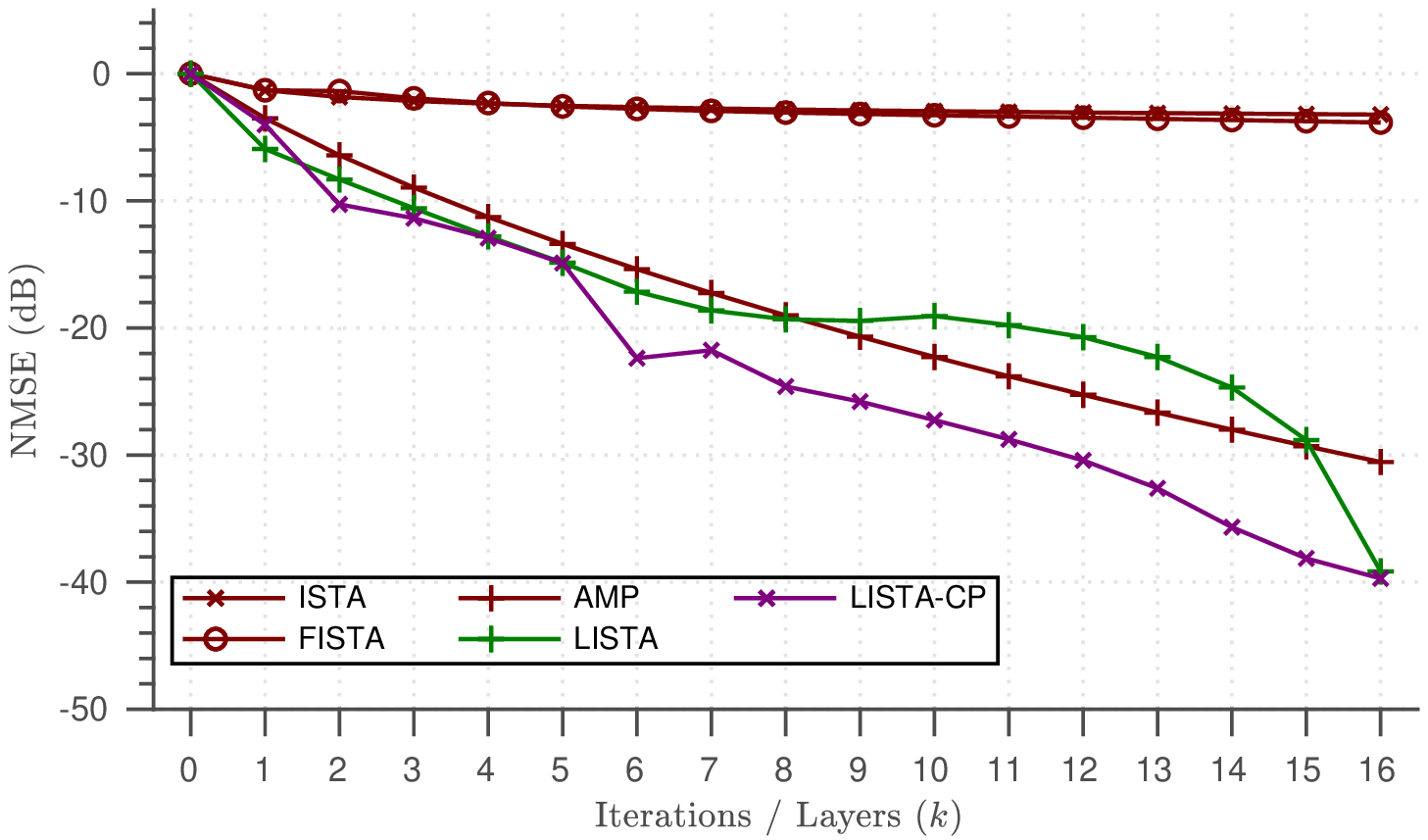}
    }
    &
    \subfigure[$\ \mathrm{SNR}=30$]{
      \includegraphics[width=0.45\linewidth]{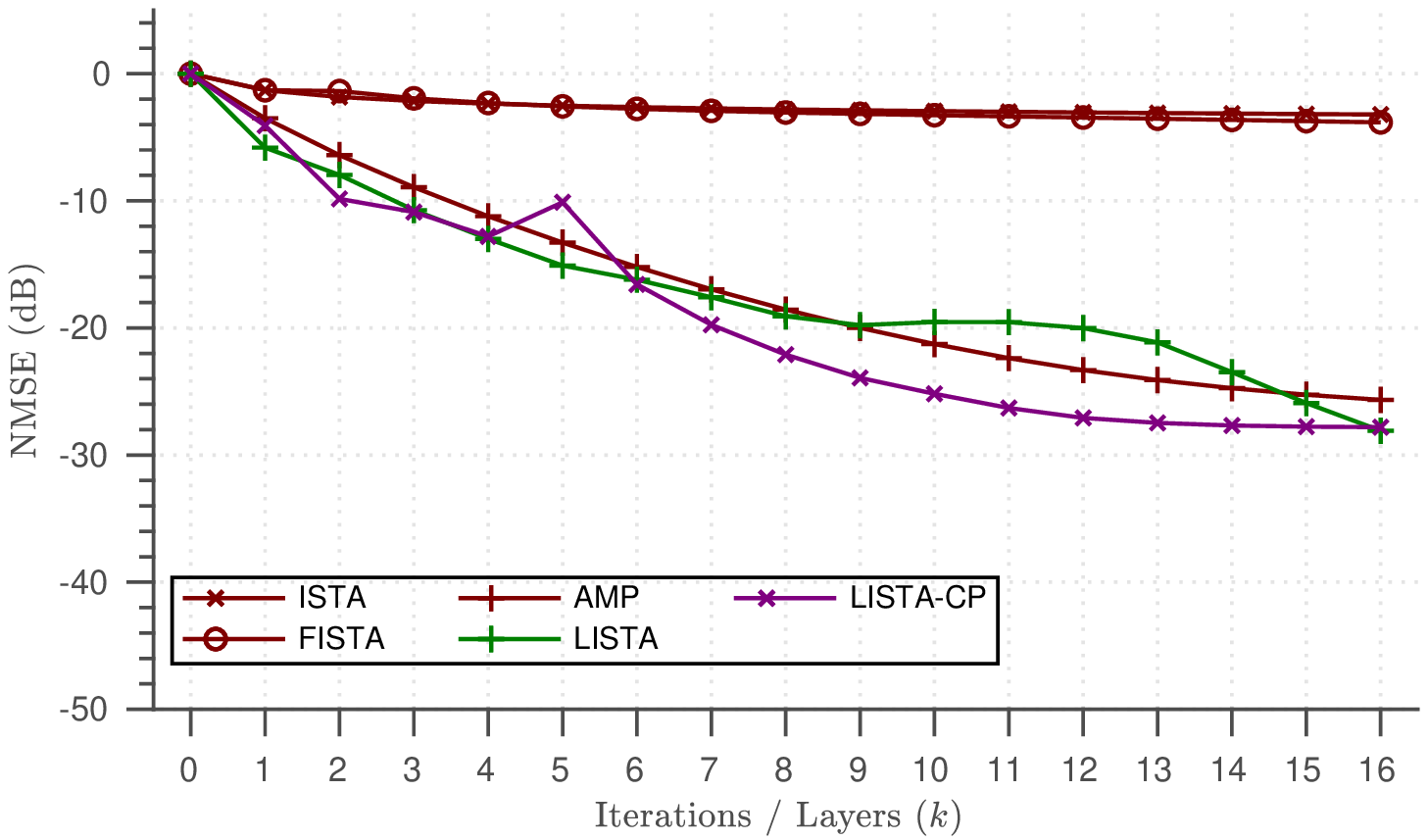}
    }
  \end{tabular}
  \caption{Validation of Theorem \ref{prop:no_ss}.}
  \label{fig:coupleway}
  \vspace{-0.5em}
\end{figure}

\begin{wrapfigure}{rt}{0.65\textwidth}
  \centering
  \includegraphics[width=\linewidth]{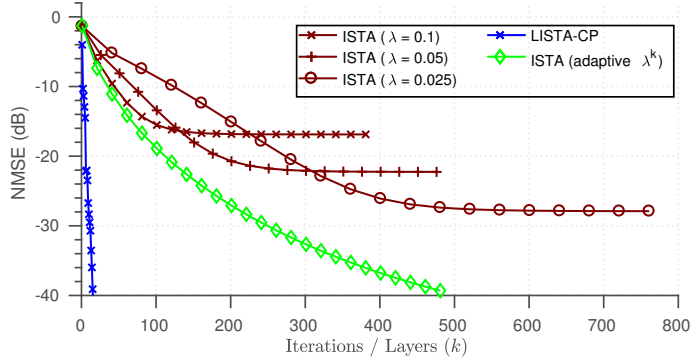}
  \vspace{-1em}
  \caption{Validating Discussion after Theorem \ref{prop:no_ss} (SNR = $\infty$).}
  \vspace{-0.5em}
  \label{fig:nmse_ista_lista}
\end{wrapfigure}

\textbf{Validation of Discussion after Theorem \ref{prop:no_ss}.} In Fig \ref{fig:nmse_ista_lista}, We compare LISTA-CP and ISTA with different $\lambda$s (see the LASSO problem (\ref{eq:lasso})) as well as an adaptive threshold rule similar to one in \cite{HaleYinZhang2008_sparse}, which is described in the supplementary.
As we have discussed after Theorem \ref{prop:no_ss}, LASSO has an inherent tradeoff based on the choice of $\lambda$.  A smaller $\lambda$ leads to a more accurate solution but slower convergence. The adaptive thresholding rule fixes this issue: it uses large $\lambda^k$ for small $k$, and gradually reduces it as $k$ increases to improve the accuracy~\cite{HaleYinZhang2008_sparse}. Except for adaptive thresholds $\{\theta^k\}_k$ ($\theta^k$ corresponds to $\lambda^k$ in LASSO), LISTA-CP has adaptive weights $\{W^k\}_k$, which further greatly accelerate the convergence. Note that we only ran ISTA and FISTA for 16 iterations, just enough and fair to compare them with the learned models. The number of iterations is so small that the difference between ISTA and FISTA is not quite observable.

\textbf{Validation of Theorem \ref{prop:ss}.}
We compare the recovery NMSEs of LISTA-CP (\ref{eq:lista_cp}) and LISTA-CPSS
(\ref{eq:lista_ss}) in Fig. \ref{fig:ss}. The result of the noiseless case (Fig.
\ref{fig:nmse_sinf}) shows that the recovery error of LISTA-SS converges to $0$
at a faster rate than that of LISTA-CP.  The difference is significant with the
number of layers $k \geq 10$, which supports our theoretical result:
``$\tilde{c}_{\text{ss}}^k > c$ as $k$ large enough'' in Theorem \ref{prop:ss}.
The result of the noisy case (Fig. \ref{fig:nmse_s40}) shows that LISTA-CPSS has
better recovery error than LISTA-CP.  This point supports
$\tilde{C}_{\text{ss}}<C$ in Theorem \ref{prop:ss}. Notably, LISTA-CPSS also
outperforms LAMP \cite{borgerding2017amp}, when $k$ > 10 in the noiseless case,
and even earlier as SNR becomes lower.

\begin{figure}[ht]
  \centering
  \begin{tabular}{cc}
    \vspace{-1em}
    \subfigure[Noiseless Case]{
      \includegraphics[width=0.45\linewidth]{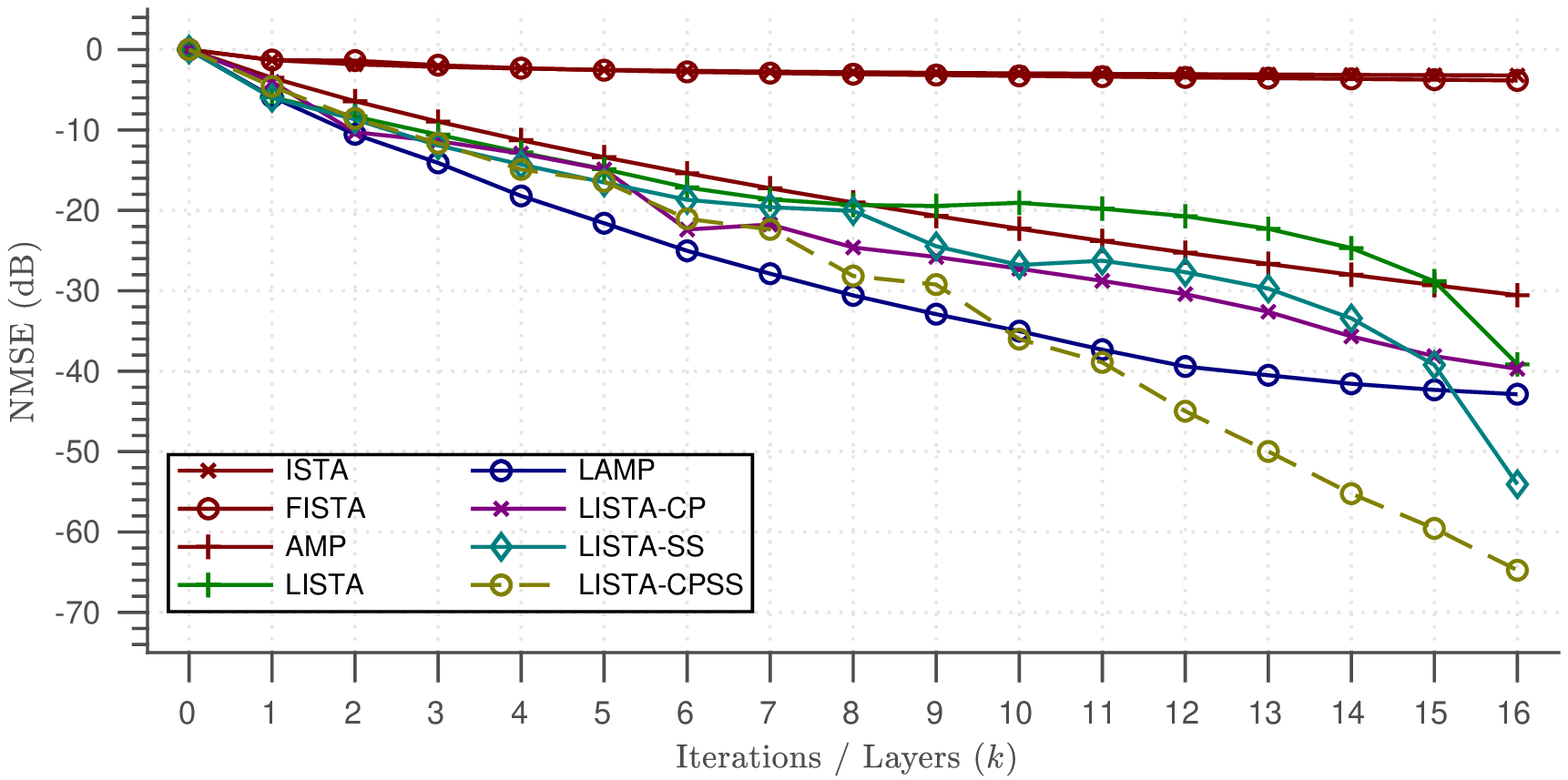}
      \label{fig:nmse_sinf}
    }
    &
    \subfigure[Noisy Case: $\textrm{SNR}$=40dB]{
      \includegraphics[width=0.45\linewidth]{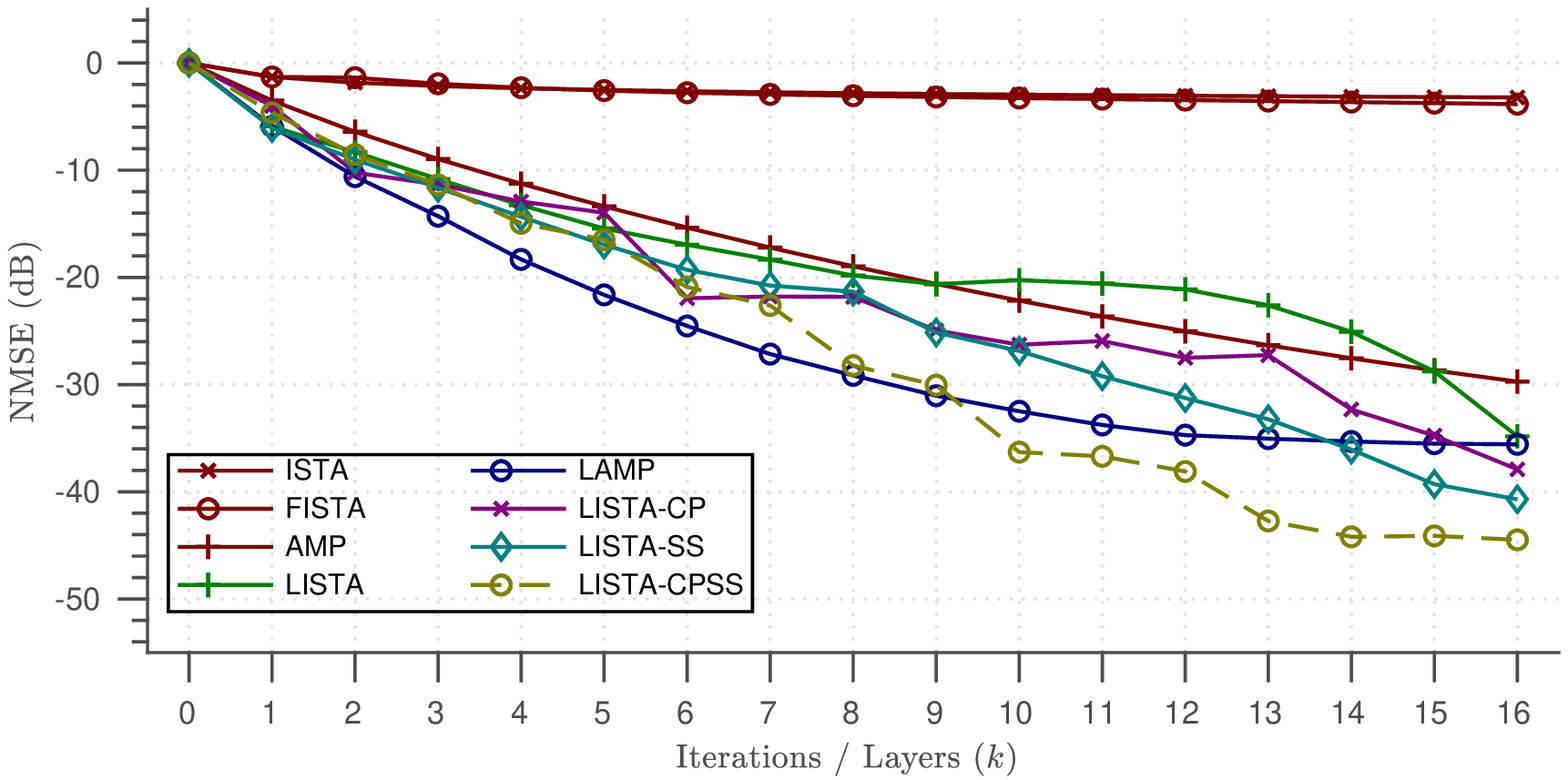}
      \label{fig:nmse_s40}
    } \\
    \subfigure[Noisy Case: $\textrm{SNR}$=30dB]{
      \includegraphics[width=0.45\linewidth]{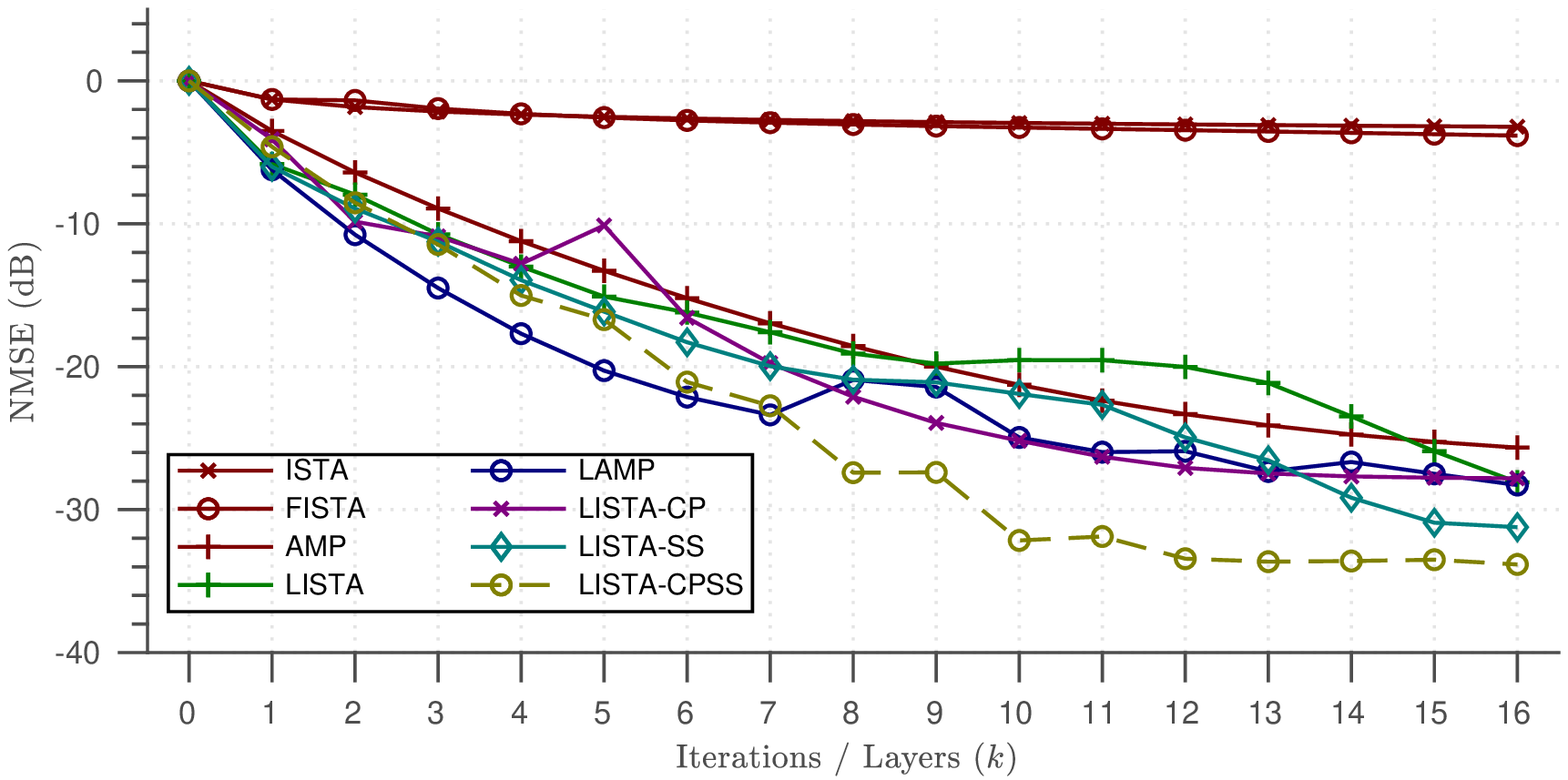}
      \label{fig:nmse_s30}
    }
    &
    \subfigure[Noisy Case: $\textrm{SNR}$=20dB]{
      \includegraphics[width=0.45\linewidth]{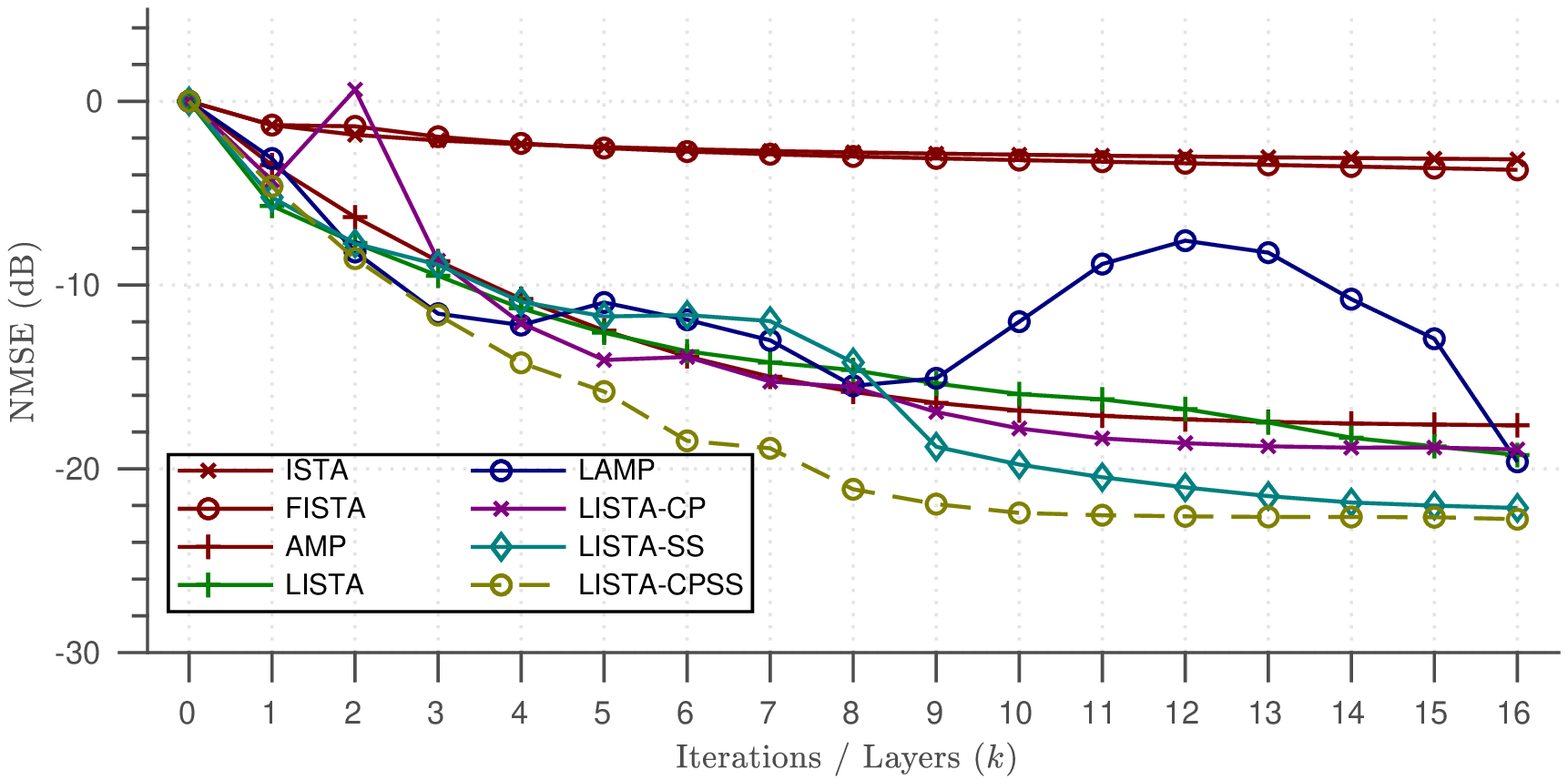}
      \label{fig:nmse_s20}
    }
  \end{tabular}
  \caption{Validation of Theorem \ref{prop:ss}.}
  \vspace{-1em}
  \label{fig:ss}
\end{figure}

\textbf{Performance with Ill-Conditioned Matrix.}
We train LISTA, LAMP, LISTA-CPSS with ill-conditioned matrices $A$ of condition
numbers  $\kappa=5,30,50$. As is shown in Fig. \ref{fig:ill}, as $\kappa$
increases, the performance of LISTA remains stable while LAMP becomes worse, and
eventually inferior to LISTA when $\kappa=50$. Although our LISTA-CPSS also
suffers from ill-conditioning, its performance always stays much better than
LISTA and LAMP.

\begin{figure}[ht]
  \centering
  \begin{tabular}{ccc}
    \hspace{-10mm}
    \subfigure[$\ \kappa=5$]{
      \includegraphics[width=0.34\linewidth]{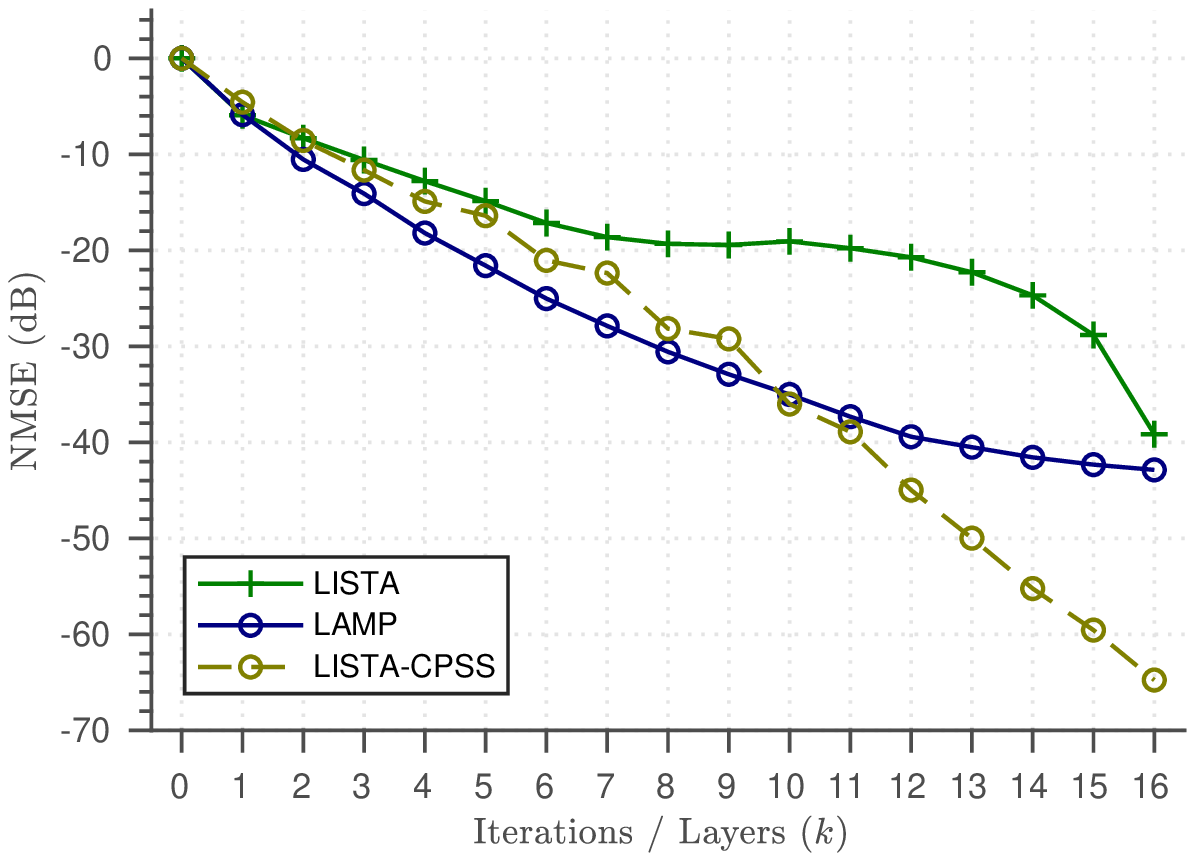}
      \label{fig:nmse_k05}
    }
    &
    \hspace{-4mm}
    \subfigure[$\ \kappa=30$]{
      \includegraphics[width=0.34\linewidth]{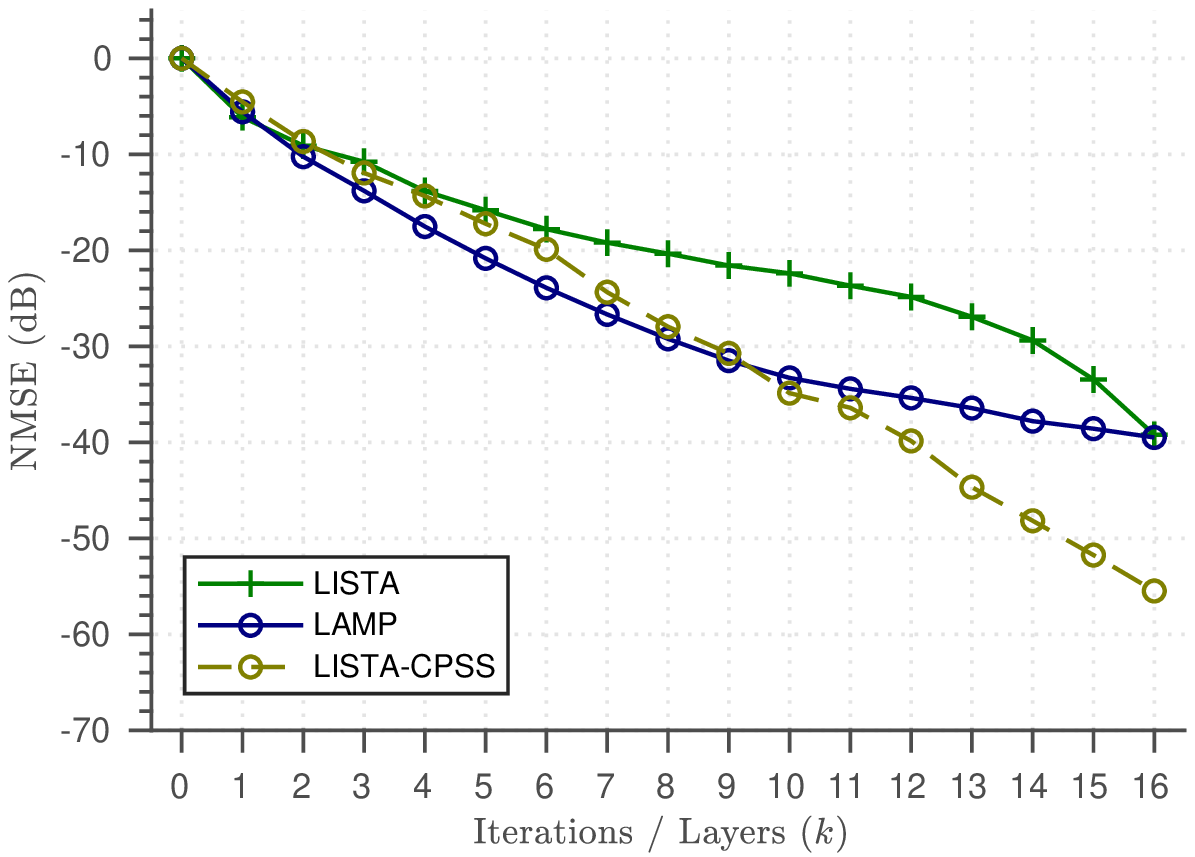}
      \label{fig:nmse_k30}
    }
    &
    \hspace{-4mm}
    \subfigure[$\ \kappa=50$]{
      \includegraphics[width=0.34\linewidth]{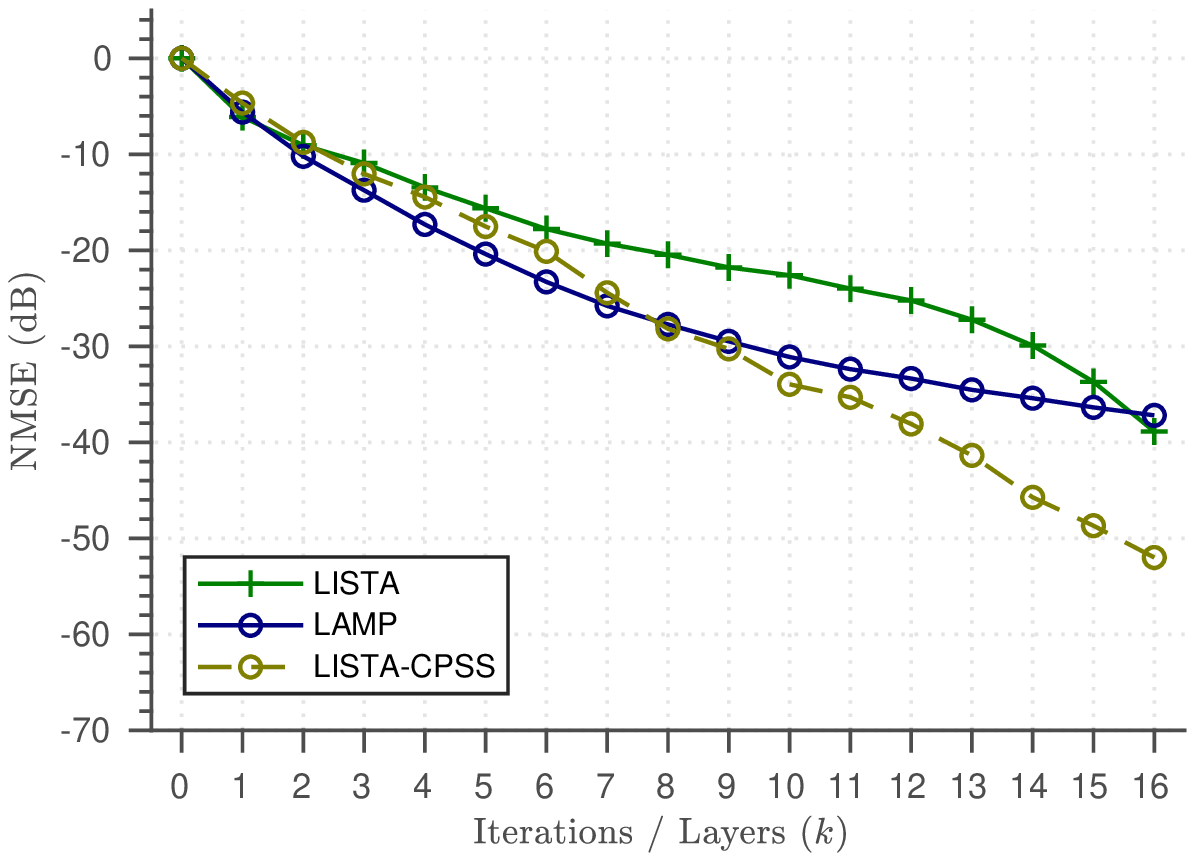}
      \label{fig:nmse_k50}
    }
  \end{tabular}
  \caption{Performance in ill-conditioned situations (SNR = $\infty$).}
  \label{fig:ill}
  \vspace{-1.5em}
\end{figure}

\vspace{-0.5em}
\subsection{Natural Image Compressive Sensing}
\vspace{-0.5em}

\textbf{Experiments Setting.}
We perform a compressive sensing (CS) experiment on natural images (patches). We
divide the BSD500~\cite{MartinFTM01} set into a training set of 400 images, a
validation set of 50 images, and a test set of 50 images. For training, we
extract 10,000 patches $f \in \mathbb{R}^{16\times 16}$ at random positions of
each image, with all means removed. We then learn a dictionary $D \in
\mathbb{R}^{256\times 512}$ from them, using a block proximal gradient method
\cite{xu2013block}. For each testing image, we divide it into non-overlapping
$16\times 16$ patches.  A Gaussian sensing matrices
$\Phi \in \mathbb{R}^{m \times 256}$ is created in the same manner as in Sec.
4.1, where $\frac{m}{256}$ is the CS ratio.

Since $f$ is typically not exactly sparse under the dictionary $D$, Assumptions
\ref{assume:basic} and \ref{assume:basic2} no longer strictly hold. The primary
goal of this experiment is thus to show that our proposed techniques remain
robust and practically useful in non-ideal conditions, rather than beating all
CS state-of-the-arts.

\textbf{Network Extension.}
In the real data case, we have no ground-truth sparse code available as the
regression target for the loss function (\ref{eq:train}). In order to bypass
pre-computing sparse codes $f$ over $D$ on the training set, we are inspired by
\cite{zhou2018sc2net}: first using layer-wise pre-training with a reconstruction
loss w.r.t. dictionary $D$ plus an $l_1$ loss, shown in
(\ref{eq:extended_train}), where $k$ is the layer index and $\Theta^k$ denotes
all parameters in the $k$-th and previous layers; then appending another
learnable fully-connected layer (initialized by $D$) to LISTA-CPSS and perform
an end-to-end training with the cost function (\ref{eq:joint_train}).
\begin{align}
  L^k(\Theta^k)=&\sum_{i=1}^N
  \|f_i - D\cdot x_i^k(\Theta^k)\|_2^2+\lambda\|x_i^k(\Theta^k)\|_1
  \label{eq:extended_train}\\
  L(\Theta, W_D)=&\sum_{i=1}^N
  \|f_i - W_D\cdot x_i^K(\Theta)\|_2^2+\lambda\|x_i^K(\Theta)\|_1
  \label{eq:joint_train}
\end{align}

\textbf{Results.}
The results are reported in Table \ref{tab:cs}.  We build CS models at the
sample rates of $20\%,30\%,40\%,50\%,60\%$ and test on the standard Set 11
images as in \cite{kulkarni2016reconnet}. We compare our results with three
baselines: the classical iterative CS solver, TVAL3 \cite{li2013efficient}; the
``black-box'' deep learning CS solver, Recon-Net \cite{kulkarni2016reconnet};a
$l_0$-based network unfolded from IHT algorithm \cite{blumensath2009iterative},
noted as LIHT; and the baseline LISTA network, in terms of PSNR (dB)
\footnote{ We applied TVAL3, LISTA and LISTA-CPSS on $16 \times 16$ patches to
be fair. For Recon-Net, we used their default setting working on $33 \times 33$
patches, which was  verified to perform better than using smaller patches.}.
We build 16-layer LIHT, LISTA and LISTA-CPSS networks and set $\lambda=0.2$. For
LISTA-CPSS, we set $p\% = 0.4\%$ more entries into the support in each layer for
support selection. We also select support w.r.t. a percentage of the largest
magnitudes within \textit{the whole batch} rather than within a single sample as
we do in theorems and simulated experiments, which we emprically find is
beneficial to the recovery performance. Table \ref{tab:cs} confirms LISTA-CPSS
as the best performer among all. The advantage of LISTA-CPSS and LISTA over
Recon-Net also endorses the incorporation of the unrolled sparse solver
structure into deep networks.
\begin{table}
  \caption{The Average PSRN (dB) for Set 11 test images with CS ratio ranging
    from 0.2 to 0.6}
  \label{tab:cs}
  \centering
  \begin{tabular}{cccccc}
    \toprule
    Algorithm  & 20\% & 30\% & 40\% & 50\% & 60\% \\
    \midrule
    TVAL3      & 25.37     & 28.39     & 29.76     & 31.51     & 33.16     \\
    Recon-Net  & 27.18     & 29.11     & 30.49     & 31.39     & 32.44     \\
    LIHT       & 25.83     & 27.83     & 29.93     & 31.73     & 34.00     \\
    LISTA      & 28.17     & 30.43     & 32.75     & 34.26     & 35.99     \\
    LISTA-CPSS & \textbf{28.25} & \textbf{30.54} & \textbf{32.87}
    		   & \textbf{34.60} & \textbf{36.39} \\
    \bottomrule
  \end{tabular}
  \vspace{-1em}
\end{table}

\vspace{-0.5em}
\section{Conclusions}
\vspace{-0.5em}
In this paper, we have introduced a partial weight coupling structure to LISTA,
which reduces the number of trainable parameters but does not hurt the
performance. With this structure, unfolded ISTA can attain a linear convergence
rate. We have further proposed support selection, which improves the convergence
rate both theoretically and empirically. Our theories are endorsed by extensive
simulations and a real-data experiment.
We believe that the methodology in this paper can be extended to analyzing and
enhancing other unfolded iterative algorithms.

\subsubsection*{Acknowledgments}

The work by X. Chen and Z. Wang is supported in part by NSF RI-1755701. The work by J. Liu and W. Yin is supported in part by NSF DMS-1720237 and ONR N0001417121. We would also like to thank all anonymous reviewers for their tremendously useful comments to help improve our work.

\bibliographystyle{unsrt}
\bibliography{nips_2018}

\newpage
\pagenumbering{arabic}
\setcounter{page}{1}
\appendix

\begin{center}
{\Large \centering \textbf{Theoretical Linear Convergence of Unfolded ISTA and Its Practical Weights and Thresholds (Supplementary Material)}} \end{center}

\paragraph{Some notation} For any $n$-dimensional vector $x\in \Re^n$, subscript
$x_S$ means the part of $x$ that is supported on the index set $S$:
\[
  x_S \triangleq [x_{i_1},x_{i_2},\cdots,x_{i_{|S|}}]^T, \quad
  i_1,\cdots,i_{|S|} \in S,\quad i_1 \leq i_2 \leq \cdots \leq i_{|S|},
\]
where $|S|$ is the size of set $S$. For any matrix $W \in \Re^{m \times n}$,
\[
  \begin{aligned}
    W(S,S) \triangleq & \begin{bmatrix}
    W(i_1, i_1), W(i_1, i_2) ,\cdots, W(i_1, i_{|S|})\\
    W(i_2, i_1), W(i_2, i_2) ,\cdots, W(i_2, i_{|S|})\\
    \cdots \\
    W(i_{|S|}, i_1), W(i_{|S|}, i_2) ,\cdots, W(i_{|S|}, i_{|S|})\\
    \end{bmatrix},\quad i_1,\cdots,i_{|S|} \in S,\quad i_1 \leq i_2 \leq \cdots \leq i_{|S|},\\
    W(S,:) \triangleq & \begin{bmatrix}
    W(i_1, 1), W(i_1, 2) ,\cdots, W(i_1, n)\\
    W(i_2, 1), W(i_2, 2) ,\cdots, W(i_2, n)\\
    \cdots \\
    W(i_{|S|}, 1), W(i_{|S|}, 2) ,\cdots, W(i_{|S|}, n)\\
    \end{bmatrix},\quad i_1,\cdots,i_{|S|} \in S,\quad i_1 \leq i_2 \leq \cdots \leq i_{|S|},\\
    W(:,S) \triangleq & \begin{bmatrix}
    W(1, i_1), W(1, i_2) ,\cdots, W(1, i_{|S|})\\
    W(2, i_1), W(2, i_2) ,\cdots, W(2, i_{|S|})\\
    \cdots \\
    W(n, i_1), W(n, i_2) ,\cdots, W(n, i_{|S|})\\
    \end{bmatrix},\quad i_1,\cdots,i_{|S|} \in S,\quad i_1 \leq i_2 \leq \cdots \leq i_{|S|}.
  \end{aligned}
\]

\section{Proof of Theorem~\ref{prop:necessary}}

\begin{proof} By LISTA model (\ref{eq:gen_ista}), the output of the $k$-th layer $x^k$ depends on parameters, observed signal $b$ and initial point $x^0$: $x^k\Big( \{W_1^\tau, W_2^\tau, \theta^\tau \}_{\tau=0}^{k-1},b,x^0 \Big)$. Since we assume $(x^*,\varepsilon)\in \X(B,s,0)$, the noise $\varepsilon=0$. Moreover, $A$ is fixed and $x^0$ is taken as $0$. Thus, $x^k$ therefore depends on parameters and $x^*$: $x^k\Big( \{W_1^\tau, W_2^\tau, \theta^\tau \}_{\tau=0}^{k-1},x^* \Big)$ In this proof, for simplicity, we use $x^k$ denote $x^k\Big( \{W_1^\tau, W_2^\tau, \theta^\tau \}_{\tau=0}^{k-1},x^* \Big)$.

\paragraph{Step 1}Firstly, we prove $\theta^k \to 0$ as $k \to \infty$.

We define a subset of $\X(B,s,0)$ given $0 < \tilde{B} \leq B$:
\[\tilde{\X}(B,\tilde{B}, s,0) \triangleq  \Big\{(x^*,\varepsilon) \Big| \tilde{B} \leq |x^\ast_i | \leq B, \forall i,~ \|x^\ast\|_0 \leq s, \varepsilon=0 \Big\} \subset \X(B,s,0).\]
Since $x^k\to x^*$ uniformly for all $(x^*,0)\in\X(B,s,0)$, so does for all $(x^*,0)\in\tilde{\X}(B,B/10, s,0)$. Then there exists a uniform $K_1>0$ for all $(x^*,0)\in\tilde{\X}(B,B/10, s,0)$, such that $|x^k_i-x^*_i| < B/10$ for all $i = 1,2,\cdots,n$ and $k\geq K_1$, which implies
\begin{equation}
\label{eq:support_match}
\text{sign}(x^k) = \text{sign}(x^*), \quad \forall k\geq K_1.
\end{equation}
The relationship between $x^k$ and $x^{k+1}$ is
\[x^{k+1} = \eta_{\theta^k}\Big( W^k_2x^k + W^k_1b \Big).\]
Let $S = \text{support}(x^*)$. Then, (\ref{eq:support_match}) implies that, for any $k \geq K_1$ and $(x^*,0)\in\tilde{\X}(B,B/10, s,0)$, we have
\[
x^{k+1}_S = \eta_{\theta^k}\Big( W^k_2(S,S) x^k_S + W^k_1(S,:) b\Big).
\]
The fact (\ref{eq:support_match}) means $x^{k+1}_i \neq 0,\forall i \in S$. By the definition $\eta_{\theta}(x) = \text{sign}(x)\max(0,|x|-\theta)$, as long as $\eta_{\theta}(x)_i \neq 0$, we have  $\eta_{\theta}(x)_i = x_i - \theta \text{ sign}(x_i)$. Thus,
\[x^{k+1}_S =  W^k_2(S,S) x^k_S + W^k_1(S,:) b - \theta^k \text{ sign}(x^*_S). \]
Furthermore, the uniform convergence of $x^k$ tells us, for any $\epsilon>0$ and $(x^*,0)\in\tilde{\X}(B,B/10, s,0)$, there exists a large enough constant $K_2>0$ and $\xi_1,\xi_2 \in \Re^{|S|}$ such that $x^k_S = x^*_S + \xi_1,x^{k+1}_S = x^*_S + \xi_2$ and $\|\xi_1\|_2 \leq \epsilon,\|\xi_2\|_2\leq \epsilon$. Then
\[
x^*_S + \xi_2 = W^k_2(S,S) (x^*_S + \xi_1) + W^k_1(S,:) b - \theta^k \text{ sign}(x^*_S) .
\]
Since the noise is supposed to be zero $\varepsilon=0$, $b=Ax^*$. Substituting $b$ with $Ax^*$ in the above equality, we obtain
\[
x^*_S = W^k_2(S,S) x^*_S + W^k_1(S,:) A(:,S) x^*_S - \theta^k  \text{ sign}(x^*_S)+ \xi,
\]where $\|\xi\|_2 = \|W^k_2(S,S)\xi_1 - \xi_2\|_2\leq (1+B_W)\epsilon$, $B_W$ is defined in Theorem \ref{prop:necessary}.
Equivalently,
\begin{equation}
\label{eq:proof_1_2}
\Big(I-W^k_2(S,S)-W^k_1A(S,S)\Big)x^*_S = \theta^k \text{ sign}(x^*_S) - \xi.
\end{equation}
For any $(x^*,0)\in\tilde{\X}(B/2,B/10, s,0)$, $(2x^*,0)\in\tilde{\X}(B,B/10, s,0)$ holds.
Thus, the above argument holds for all $2x^*$ if $(x^*,0)\in\tilde{\X}(B/2,B/10, s,0)$. Substituting $x^*$ with $2x^*$ in (\ref{eq:proof_1_2}), we get
\begin{equation}
\label{eq:proof_1_3}
\Big(I-W^k_2(S,S)-W^k_1A(S,S)\Big)2x^*_S = \theta^k \text{ sign}(2x^*_S) - \xi' = \theta^k\text{ sign}(x^*_S) - \xi',
\end{equation}
where $\|\xi'\|_2 \leq (1+B_W)\epsilon$. Taking the difference between (\ref{eq:proof_1_2}) and (\ref{eq:proof_1_3}), we have
\begin{equation}
\label{eq:proof_1_1}
\Big(I-W^k_2(S,S)-W^k_1A(S,S)\Big)x^*_S = -\xi'+\xi.
\end{equation}
Equations (\ref{eq:proof_1_2}) and (\ref{eq:proof_1_1}) imply \[\theta^k \text{ sign}(x^*_S) - \xi = -\xi'+\xi.\]Then $\theta^k$ can be bounded with
\begin{equation}
\label{eq:proof_theta_bdd}
\theta^k \leq \frac{3(1+B_W)}{\sqrt{|S|}}\epsilon,\quad \forall k\geq \max(K_1,K_2).
\end{equation}
The above conclusion holds for all $|S|\geq 1$. Moreover, as a threshold in $\eta_{\theta}$, $\theta^k\geq 0$. Thus, $0\leq \theta^k \leq 3(1+B_W)\epsilon$ for any $\epsilon>0$ as long as $k$ large enough. In another word, $\theta^k\to 0$ as $k \to \infty$.

\paragraph{Step 2}We prove that $I-W^k_2-W^k_1A \to 0$ as $k \to \infty$.

LISTA model (\ref{eq:gen_ista}) and $b=Ax^*$ gives
\[
\begin{aligned}
x^{k+1}_S =& \eta_{\theta^k}\Big( W^k_2(S,:) x^k + W^k_1(S,:) b\Big)\\
=& \eta_{\theta^k}\Big( W^k_2(S,:) x^k + W^k_1(S,:) A(:,S)x^*_S\Big)\\
\in & W^k_2(S,:) x^k + W^k_1(S,:) A(:,S)x^*_S - \theta^k \partial \ell_1(x^{k+1}_S),
\end{aligned}
\]
where $\partial \ell_1(x)$ is the sub-gradient of $\|x\|_1$. It is a set defined component-wisely:
\begin{equation}
\label{eq:proof_subgradient}
\partial \ell_1(x)_i =\begin{cases}
                        \{\text{sign}(x_i)\}\quad &\text{if $x_i \neq 0$}, \\
                        [-1,1] \quad &\text{if $x_i = 0$}.
                    \end{cases}
\end{equation}
The uniform convergence of $x^k$ implies, for any $\epsilon>0$ and $(x^*,0)\in\X(B, s,0)$, there exists a large enough constant $K_3>0$ and $\xi_1,\xi_2 \in \Re^{n}$ such that $x^k = x^* + \xi_3,x^{k+1} = x^* + \xi_4$ and $\|\xi_3\|_2 \leq \epsilon,\|\xi_4\|_2\leq \epsilon$. Thus,
\[
x^{*}_S + (\xi_4)_S
\in  W^k_2(S,S) x^*_S + W^k_2(S,:) \xi_3 + W^k_1A (S,S) x^*_S - \theta^k \partial \ell_1(x^{k+1}_S)
\]
\[
\Big(I-W^k_2(S,S)-W^k_1A(S,S)\Big) x^{*}_S
\in   W^k_2(S,:) \xi_3 - (\xi_4)_S  - \theta^k \partial \ell_1(x^{k+1}_S)
\]
By the definition (\ref{eq:proof_subgradient}) of $\partial \ell_1$, every element in $\partial \ell_1(x),\forall x\in \Re$ has a magnitude less than or equal to $1$. Thus, for any $\xi \in \ell_1(x^{k+1}_S)$, we have $\|\xi\|_2 \leq \sqrt{|S|}$, which implies
\[
\Big\|\Big(I-W^k_2(S,S)-W^k_1A(S,S)\Big) x^{*}_S \Big\|_2
\leq   \|W^k_2\|_2 \epsilon + \epsilon + \theta^k \sqrt{|S|}.
\]
Combined with (\ref{eq:proof_theta_bdd}), we obtain the following inequality for all $k\geq \max(K_1,K_2,K_3)$:
\[
\Big\|\Big(I-W^k_2(S,S)-W^k_1A(S,S)\Big) x^{*}_S \Big\|_2
\leq   \|W^k_2\|_2 \epsilon + \epsilon + 3(1+B_W)\epsilon = 4(1+B_W)\epsilon.
\]
The above inequality holds for all $(x^*,0)\in\X(B,s,0)$, which implies, for all $k\geq \max(K_1,K_2,K_3)$,
\[
\begin{aligned}
\sigma_{\text{max}}\Big(I-W^k_2(S,S)-W^k_1A(S,S)\Big) =& \sup_{\substack{\text{support}(x^*) = S\\ \|x^*_i \|_2 = B}} \Big\{ \frac{\|(I-W^k_2(S,S)-W^k_1A(S,S))x^*_S\|_2}{B} \Big\}\\
\leq & \sup_{(x^*,0)\in\X(B,s,0)} \Big\{ \frac{\|(I-W^k_2(S,S)-W^k_1A(S,S))x^*_S\|_2}{B} \Big\}\\
\leq& \frac{4(1+B_W)}{B}\epsilon.
\end{aligned}
\] Since $s\geq 2$, $I-W^k_2(S,S)-W^k_1A(S,S) \to 0$ uniformly for all $S$ with $2 \leq |S|\leq s$. Then, $I-W^k_2-W^k_1A \to 0$ as $k \to \infty$.
\end{proof}

\section{Proof of Theorem \ref{prop:no_ss}}
Before proving Theorem \ref{prop:no_ss}, we introduce some definitions and a lemma.
\begin{definition}
Mutual coherence
$\mu $ of $A \in \Re^{m\times n}$ (each column of $A$ is normalized) is defined as:
\begin{equation}
\label{eq:mutual_coher0}
\mu(A) =  \max_{\substack{i\neq j\\1 \leq i,j \leq n} }|(A_i)^\top A_j|,
\end{equation}
where $A_i$ refers to the $i^{\text{th}}$ column of matrix $A$.

Generalized mutual coherence $\tilde{\mu} $ of $A \in \Re^{m\times n}$ (each column of $A$ is normalized) is defined as:
\begin{equation}
\label{eq:mutual_coher}
\tilde{\mu}(A) = \inf_{\substack{W \in \Re^{m\times n}\\(W_i)^TA_i = 1,1 \leq i \leq n} } \bigg\{ \max_{\substack{i\neq j\\1 \leq i,j \leq n} }|(W_i)^\top A_j|\bigg\}.
\end{equation}
\end{definition}
The following lemma tells us the generalized mutual coherence is attached at some $\tilde{W}\in\Re^{m\times n}$.
\begin{lemma}
\label{lemma:mucoher}
There exists a matrix $\widetilde{W}\in \Re^{m\times n}$ that attaches the infimum given in (\ref{eq:mutual_coher}):
\[(\widetilde{W}_i)^TA_i = 1,1 \leq i \leq n,\quad \max_{\substack{i\neq j\\1 \leq i,j \leq n} }|(\widetilde{W}_i)^\top A_j| = \tilde{\mu} \]
\end{lemma}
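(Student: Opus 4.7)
The plan is to exploit the fact that the minimization in (\ref{eq:mutual_coher}) decouples across the columns of $W$, and then use a straightforward compactness argument on each column. Since both the normalization $(W_i)^T A_i = 1$ and each off-diagonal product $(W_i)^T A_j$ depend only on a single column $W_i$, the problem separates as
\[
\tilde{\mu}(A) = \max_{1 \leq i \leq n} \tilde{\mu}_i, \qquad \tilde{\mu}_i := \inf_{W_i \in \Re^m,\ (W_i)^T A_i = 1} \max_{j \neq i} |(W_i)^T A_j|.
\]
It therefore suffices to show that each $\tilde{\mu}_i$ is attained by some $\widetilde{W}_i \in \Re^m$; stacking these vectors as the columns of $\widetilde{W}$ yields a matrix attaining $\tilde{\mu}(A)$.

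Fix $i$. First, the feasible choice $W_i = A_i$ (valid since $\|A_i\|_2 = 1$) shows $\tilde{\mu}_i \leq \mu(A) < \infty$, so the infimum is finite. Let $\{W_i^{(k)}\}_{k=1}^{\infty}$ be a minimizing sequence, i.e.\ $(W_i^{(k)})^T A_i = 1$ and $\max_{j \neq i} |(W_i^{(k)})^T A_j| \to \tilde{\mu}_i$. Combining these, $\|A^T W_i^{(k)}\|_\infty$ is bounded by some constant $M$ for all large $k$.

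The next step is to promote this bound on $A^T W_i^{(k)}$ into a bound on $W_i^{(k)}$ itself. Under the paper's implicit assumption that $A$ has full row rank $m$ (which holds almost surely for the Gaussian matrices used throughout), the linear map $V \mapsto A^T V$ is injective on $\Re^m$; equivalence of norms then gives a constant $c > 0$ with $\|A^T V\|_\infty \geq c \|V\|_2$ for all $V \in \Re^m$. Applied to $W_i^{(k)}$ this yields $\|W_i^{(k)}\|_2 \leq M/c$, so $\{W_i^{(k)}\}$ is bounded. By Bolzano--Weierstrass, extract a convergent subsequence $W_i^{(k_\ell)} \to \widetilde{W}_i$. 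Continuity of each linear form $V \mapsto V^T A_j$ implies $(\widetilde{W}_i)^T A_i = 1$ and $\max_{j \neq i} |(\widetilde{W}_i)^T A_j| \leq \tilde{\mu}_i$, while the reverse inequality is automatic from the definition of the infimum. Hence $\tilde{\mu}_i$ is attained at $\widetilde{W}_i$.

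The main obstacle is precisely the boundedness step above: without some control on the null space of $A^T$, the feasible set is unbounded and a minimizing sequence could escape to infinity. In the full-row-rank case this is handled by the norm-equivalence argument; if one wished to dispense with the rank assumption, any vector in $\ker(A^T)$ leaves both the constraint and the objective invariant, so one can restrict $W_i$ to $\operatorname{range}(A)$ and re-run the same argument there. The remaining steps---decoupling across columns, continuity of the objective, and reassembling $\widetilde{W}$ from its columns---are routine.
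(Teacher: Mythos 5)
Your proof is correct, but it takes a genuinely different route from the paper. The paper simply observes that the problem in (\ref{eq:mutual_coher}) is (equivalent to) a linear program --- a min--max of absolute values of linear functions with linear equality constraints --- notes that it is feasible (take $W=A$) and that its optimal value is bounded between $0$ and $\mu(A)$, and then invokes the standard LP fact that a feasible linear program with bounded optimal value attains an optimal solution. You instead decouple the problem across the columns $W_i$, run a direct minimizing-sequence/compactness argument for each column, and handle the genuine obstacle --- that the feasible set is unbounded in the directions of $\ker(A^T)$ --- by either assuming full row rank or projecting onto $\operatorname{range}(A)$, which you correctly note leaves both the constraint and the objective unchanged. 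Your decoupling is valid (the objective is a max over per-column objectives and the constraints are per-column), and the projection fix is sound, so the argument goes through without any rank assumption. What each approach buys: the paper's LP citation is shorter and sidesteps the unboundedness issue entirely, since the existence result for LPs does not require a compact feasible region; your argument is more elementary and self-contained, at the cost of the extra kernel/projection step, and it additionally exhibits the column-wise structure of the minimizer, which is mildly informative but not needed elsewhere in the paper.
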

\begin{proof}Optimization problem given in (\ref{eq:mutual_coher}) is a linear programming because it minimizing a piece-wise linear function with linear constraints. Since each column of $A$ is normalized, there is at least one matrix in the feasible set:
\[A \in \{W\in\Re^{m \times n}: (W_i)^TA_i = 1,1 \leq i \leq n\}.\]
In another word, optimization problem (\ref{eq:mutual_coher}) is feasible. Moreover, by the definition of infimum bound (\ref{eq:mutual_coher}), we have
\[0 \leq \tilde{\mu}(A) \leq \max_{\substack{i\neq j\\1 \leq i,j \leq n} }|(A_i)^\top A_j| = \mu(A).\]Thus, $\tilde{\mu}$ is bounded. According to Corollary 2.3 in \cite{bertsimas1997introduction}, a feasible and bounded linear programming problem has an optimal solution.
\end{proof}

Based on Lemma \ref{lemma:mucoher}, we define a set of ``good'' weights which $W^k$s are chosen from:
\begin{definition}
Given $A\in \Re^{m \times n}$, a weight matrix is ``good'' if it belongs to
\begin{equation}
\label{eq:optimal_w}
\X_W(A) = \argmin_{W \in \Re^{m \times n}}
\bigg\{ \max_{1 \leq i,j \leq n }|W_{i,j}|: (W_i)^TA_i = 1,1 \leq i \leq n, \max_{\substack{i\neq j\\1 \leq i,j \leq n} }|(W_i)^\top A_j| = \tilde{\mu} \bigg\}.
\end{equation}
Let $C_W = \max_{1 \leq i,j \leq n }|W_{i,j}|$, if $W \in \X_W(A)$.
\end{definition}

With definitions (\ref{eq:mutual_coher}) and (\ref{eq:optimal_w}), we propose a choice of parameters:
\begin{equation}
\label{eq:wtheta}
W^k \in \X_W(A), \quad \theta^k =  \sup_{(x^*,\varepsilon)\in \X(B,s,\sigma)}\{\tilde{\mu} \|x^k(x^*,\varepsilon) - x^*\|_1\} + C_W \sigma,
\end{equation}
which are uniform for all $(x^*,\varepsilon)\in \X(B,s,\sigma)$. In the following proof line, we prove that (\ref{eq:wtheta}) leads to the conclusion (\ref{eq:linear_conv}) in Theorem \ref{prop:no_ss}.

\paragraph{Proof of Theorem \ref{prop:no_ss}}
\begin{proof}In this proof, we use the notation $x^k$ to replace $x^k(x^*,\varepsilon)$ for simplicity.

\paragraph{Step 1: no false positives.}Firstly, we take $(x^*,\varepsilon) \in \X(B,s,\sigma)$. Let $S = \text{support}(x^*)$. We want to prove by induction that, as long as (\ref{eq:wtheta}) holds, $x^k_i = 0, \forall i \notin S, \forall k$ (no false positives). When $k=0$, it is satisfied since $x^0=0$. Fixing $k$, and assuming $x^k_i = 0, \forall i\notin S$, we have
\[
\begin{aligned}
x^{k+1}_i =& \eta_{\theta^k}\Big(x^k_i - \sum_{j\in S} (W^k_i)^T (Ax^k - b) \Big)\\
=& \eta_{\theta^k}\Big( - \sum_{j\in S} (W^k_i)^T A_j (x^k_j - x^*_j) + (W^k_i)^T\varepsilon \Big),\quad \forall i \notin S.
\end{aligned}
\]
Since $\theta^k = \tilde{\mu} \sup_{x^*,\varepsilon}\{\|x^k - x^*\|_1\} + C_W \sigma$ and $W^k \in \X_W(A)$,
\[
\theta^k \geq \tilde{\mu}\|x^k - x^*\|_1 + C_W \|\varepsilon\|_1\geq \Big|- \sum_{j\in S} (W^k_i)^T A_j (x^k_j - x^*_j) + (W^k_i)^T \varepsilon \Big|,\forall i \notin S,\]
which implies $x^{k+1}_i=0,\forall i\notin S$ by the definition of $\eta_{\theta^k}$. By induction, we have
\begin{equation}
\label{eq:no_false_pos}
x^k_i = 0, \forall i \notin S, \quad \forall k.
\end{equation}
In another word, threshold rule in (\ref{eq:wtheta}) ensures no false positives\footnote{In practice, if we obtain $\theta^k$ by training, but not (\ref{eq:wtheta}), the learned $\theta^k$ may not guarantee no false positives for all layers. However, the magnitudes on the false positives are actually small compared to those on true positives. Our proof sketch are qualitatively describing the learning-based ISTA.} for all $x^k,k=1,2,\cdots$

\paragraph{Step 2: error bound for one $(x^*,\varepsilon)$.}
Next, let's consider the components on $S$. For all $i \in S$,
\[
\begin{aligned}
x^{k+1}_i = & ~ \eta_{\theta^k}\Big(x^k_i - (W^k_i)^T A_S (x^k_S - x^*_S) + (W^k_i)^T\varepsilon \Big)\\
\in &~ x^k_i - (W^k_i)^T A_S (x^k_S - x^*_S) + (W^k_i)^T\varepsilon - \theta^k \partial \ell_1(x^{k+1}_i),
\end{aligned}
\]
where $\partial \ell_1(x)$ is defined in (\ref{eq:proof_subgradient}).
Since $(W^k_i)^TA_i=1$, we have
\[
\begin{aligned}
x^k_i - (W^k_i)^T A_S (x^k_S - x^*_S) =& x^k_i - \sum_{j\in S,j\neq i}(W^k_i)^T A_j (x^k_j - x^*_j) - (x^k_i-x^*_i)\\
=& x^*_i - \sum_{j\in S,j\neq i}(W^k_i)^T A_j (x^k_j - x^*_j) .
\end{aligned}
\]
Then,
\[x^{k+1}_i - x^*_i \in - \sum_{j\in S,j\neq i}(W^k_i)^T A_j (x^k_j - x^*_j) + (W^k_i)^T\varepsilon - \theta^k \partial \ell_1(x^{k+1}_i),\quad\forall i \in S.\]
By the definition (\ref{eq:proof_subgradient}) of $\partial \ell_1$, every element in $\partial \ell_1(x),\forall x\in \Re$ has a magnitude less than or equal to $1$. Thus, for all $i \in S$,
\[
\begin{aligned}
|x^{k+1}_i - x^*_i| \leq & \sum_{j\in S,j\neq i} \Big|(W^k_i)^T A_j\Big| |x^k_j - x^*_j| + \theta^k + |(W^k_i)^T \varepsilon|\\
\leq & \tilde{\mu} \sum_{j\in S,j\neq i} |x^k_j-x^*_j| + \theta^k + C_W\|\varepsilon\|_1
\end{aligned}
\]
Equation (\ref{eq:no_false_pos}) implies $\|x^{k}-x^*\|_1 = \|x^k_S - x^*_S\|_1$ for all $k$.
Then
\[
\begin{aligned}
\|x^{k+1} - x^*\|_1 = \sum_{i\in S} |x^{k+1}_i - x^*_i|
\leq  & \sum_{i\in S} \Big( \tilde{\mu} \sum_{j\in S,j\neq i} |x^k_j-x^*_j| + \theta^k  + C_W \sigma\Big) \\
 = & \tilde{\mu} (|S|-1) \sum_{i\in S} |x^k_i-x^*_i| + \theta^k |S| + |S|C_W \sigma \\
\leq & \tilde{\mu}(|S|-1) \|x^{k} - x^*\|_1 + \theta^k |S| + |S|C_W \sigma
\end{aligned}
 \]

\paragraph{Step 3: error bound for the whole data set.}
Finally, we take supremum over $(x^*, \varepsilon) \in \X(B,x,\sigma)$, by $|S|\leq s$,
\[\sup_{x^*,\varepsilon} \{\|x^{k+1} - x^*\|_1\} \leq \tilde{\mu}(s-1)\sup_{x^*,\varepsilon}\{ \|x^{k} - x^*\|_1 \} + s \theta^k  + s C_W\sigma.\]
By
$
\theta^k = \sup_{x^*,\varepsilon} \{\tilde{\mu} \|x^k-x^*\|_1 \} + C_W \sigma
$,
we have
\[
\sup_{x^*,\varepsilon} \{\|x^{k+1}-x^*\|_1\} \leq (2\tilde{\mu} s-\tilde{\mu} ) \sup_{x^*,\varepsilon}  \{\|x^k-x^*\|_1\} + 2s C_W \sigma.
\]
By induction, with $c=-\log(2\tilde{\mu} s - \tilde{\mu}), C =  \frac{2s C_W}{1 + \tilde{\mu}-2\tilde{\mu} s}$, we obtain
\[\begin{aligned}
\sup_{x^*,\varepsilon} \{\|x^{k+1}-x^*\|_1\} \leq &  (2\tilde{\mu} s-\tilde{\mu} )^{k+1} \sup_{x^*,\varepsilon}  \{\|x^0 - x^*\|_1\} + 2s C_W \sigma \Big(\sum_{\tau=0}^{k+1} (2\tilde{\mu} s-\tilde{\mu} )^\tau \Big)\\
\leq & (2\tilde{\mu} s-\tilde{\mu} )^k s B + C\sigma  = sB \exp(-ck) + C\sigma.
\end{aligned}
\]
Since $\|x\|_2 \leq \|x\|_1$ for any $x\in \Re^{n}$
, we can get the upper bound for $\ell_2$ norm:
\[\sup_{x^*,\varepsilon} \{\|x^{k+1}-x^*\|_2\} \leq \sup_{x^*,\varepsilon} \{\|x^{k+1}-x^*\|_1\}
\leq s B \exp(-ck) + C\sigma .\]
As long as $s < (1+1/\tilde{\mu})/2$, $c = -\log(2\tilde{\mu} s - \tilde{\mu}) > 0$, then the error bound (\ref{eq:linear_conv}) holds uniformly for all $(x^*,\varepsilon)\in\X(B,s,\sigma)$.
\end{proof}

\section{Proof of Theorem \ref{prop:ss}}

\begin{proof}
In this proof, we use the notation $x^k$ to replace $x^k(x^*,\varepsilon)$ for simplicity.

\paragraph{Step 1: proving (\ref{eq:linear_ss}).}
Firstly, we assume Assumption 1 holds. Take $(x^*,\varepsilon) \in \X(B,s,\sigma)$. Let $S = \text{support}(x^*)$. By the definition of selecting-support operator ${\eta_\mathrm{ss}}_{\theta^k}^{p^k}$, using the same argument with the proof of Theorem \ref{prop:no_ss}, we have LISTA-CPSS also satisfies $x^k_i = 0, \forall i \notin S, \forall k$ (no false positive) with the same parameters as (\ref{eq:wtheta}).

For all $i \in S$, by the definition of ${\eta_\mathrm{ss}}_{\theta^k}^{p^k}$, there exists $\xi^k\in\Re^{n}$ such that
\[
\begin{aligned}
x^{k+1}_i = & {\eta_\mathrm{ss}}_{\theta^k}^{p^k} \Big( x^k_i - (W^k_i)^T A_S (x^k_S - x^*_S) + (W^k_i)^T \varepsilon \Big)\\
=& x^k_i - (W^k_i)^T A_S (x^k_S - x^*_S) + (W^k_i)^T \varepsilon - \theta^k \xi^k_i ,
\end{aligned}
\]
where
\[
\xi^k_i \begin{cases} = 0\quad&\text{if $i \notin S$}\\
                        \in [-1,1]\quad &\text{if $i \in S, x^{k+1}_i= 0$} \\
                         =\text{sign}(x^{k+1}_i)\quad &\text{if $i \in S, x^{k+1}_i \neq 0$, } i\notin S^{p^k}(x^{k+1}),\\
                         = 0 \quad &\text{if $i \in S, x^{k+1}_i \neq 0$, } i\in S^{p^k}(x^{k+1}).
                    \end{cases}
\]
The set $S^{p^k}$ is defined in (\ref{eq:spk}). Let
\[
S^k(x^*,\varepsilon) = \{i|i \in S, x^{k+1}_i \neq 0, i\in S^{p^k}(x^{k+1})\},
\]
where $S^k$ depends on $x^*$ and $\varepsilon$ because $x^{k+1}$ depends on $x^*$ and $\varepsilon$. Then, using the same argument with that of LISTA-CP (Theorem \ref{prop:no_ss}), we have
\[\|x^{k+1}_S - x^*_S\|_1 \leq \tilde{\mu}(|S|-1) \|x^{k}_S - x^*_S\|_1 + \theta^k \big(|S|-|S^k(x^*,\varepsilon)|\big) + |S| C_W \|\varepsilon\|_1. \]
Since $x^k_i = 0, \forall i \notin S$, $\|x^{k}-x^*\|_2 = \|x^k_S - x^*_S\|_2$ for all $k$. Taking supremum over $(x^*, \varepsilon) \in \X(B,s,\sigma)$, we have
\[\sup_{x^*,\varepsilon}\|x^{k+1} - x^*\|_1 \leq (\tilde{\mu} s -1)\sup_{x^*,\varepsilon}\|x^{k} - x^*\|_1 + \theta^k (s-\inf_{x^*,\varepsilon}|S^k(x^*,\varepsilon)|) + s C_W \sigma. \]
By
$
\theta^k = \sup_{x^*,\varepsilon} \{\tilde{\mu} \|x^k-x^*\|_1 \} + C_W \sigma
$,
we have
\[
\sup_{x^*,\varepsilon} \{\|x^{k+1}-x^*\|_1\} \leq \Big(2\tilde{\mu} s - \tilde{\mu} - \tilde{\mu} \inf_{x^*,\varepsilon}|S^k(x^*,\varepsilon)| \Big) \sup_{x^*,\varepsilon}  \{\|x^k-x^*\|_1\} + 2 s C_W \sigma.
\]
Let
\[
\begin{aligned}
c_\mathrm{ss}^k = & -\log \Big( 2\tilde{\mu} s - \tilde{\mu} - \tilde{\mu} \inf_{x^*,\varepsilon}|S^k(x^*,\varepsilon)| \Big)\\
C_{\mathrm{ss}} = & 2 s C_W  \sum_{k=0}^{\infty} \prod_{t=0}^k\exp(-c_\mathrm{ss}^t)).
\end{aligned}
\]
Then, \[
\begin{aligned}
&\sup_{x^*,\varepsilon} \{\|x^{k}-x^*\|_1\}\\
\leq& \Big(\prod_{t=0}^{k-1}\exp(-c_\mathrm{ss}^t)\Big) \sup_{x^*,\varepsilon} \{\|x^{0}-x^*\|_1\} + 2 s C_W   \bigg(\prod_{t=0}^0\exp(-c_\mathrm{ss}^t)) + \cdots + \prod_{t=0}^{k-1}\exp(-c_\mathrm{ss}^t))\bigg)\sigma\\
\leq & s B \Big(\prod_{t=0}^{k-1}\exp(-c_\mathrm{ss}^t)\Big) + C_{\mathrm{ss}}\sigma \leq B \exp\Big(-\sum_{t=0}^{k-1}c_\mathrm{ss}^t\Big) + C_{\mathrm{ss}}\sigma.
\end{aligned}
\]
With $\|x\|_2 \leq \|x\|_1$, we have
\[\sup_{x^*,\varepsilon} \{\|x^{k}-x^*\|_2\} \leq  \sup_{x^*,\varepsilon} \{\|x^{k}-x^*\|_1\} \leq s B \Big(\prod_{t=0}^{k-1}\exp(-c_\mathrm{ss}^t)\Big) + C_{\text{ss}}\sigma.\]
Since $|S^k|$ means the number of elements in $S^k$, $|S^k|\geq 0$. Thus, $c_\mathrm{ss}^k \geq c$ for all $k$.
Consequently,
\[C_{\mathrm{ss}} \leq 2 s C_W   \Big(\sum_{k=0}^{\infty} \exp(-ck)) \Big)  = 2 s C_W  \Big(\sum_{k=0}^{\infty} (2\tilde{\mu} s - \tilde{\mu})^k \Big)  = \frac{2 s C_W}{1+\tilde{\mu}-2\tilde{\mu} s}  = C.\]

\paragraph{Step 2: proving (\ref{eq:linear_ss2}).}
Secondly, we assume Assumption 2 holds. Take $(x^*,\varepsilon) \in \bar{\X}(B,\underline{B}, s,\sigma) $.  The parameters are taken as
\[
W^k \in \X_W(A), \quad \theta^k =  \sup_{(x^*,\varepsilon)\in \bar{\X}(B,\underline{B}, s,\sigma)}\{\tilde{\mu} \|x^k(x^*,\varepsilon) - x^*\|_1\} + C_W \sigma.
\]
With the same argument as before, we get
\[\sup_{(x^*,\varepsilon)\in \bar{\X}(B,\underline{B}, s,\sigma)} \{\|x^{k}-x^*\|_2\}
\leq s B \exp\Big(-\sum_{t=0}^{k-1}\tilde{c}_\mathrm{ss}^t\Big) + \tilde{C}_{\text{ss}}\sigma,\]
where \[
\begin{aligned}
\tilde{c}_\mathrm{ss}^k = & -\log \Big( 2\tilde{\mu} s - \tilde{\mu} - \tilde{\mu} \inf_{(x^*,\varepsilon)\in \bar{\X}(B,\underline{B}, s,\sigma)} |S^k(x^*,\varepsilon)| \Big) \geq c \\
\tilde{C}_{\mathrm{ss}} = & 2 s C_W   \bigg(\sum_{k=0}^{\infty} \prod_{t=0}^k\exp(-\tilde{c}_\mathrm{ss}^t)) \bigg) \leq C.
\end{aligned}
\]
Now we consider $S^k$ in a more precise way. The definition of $S^k$ implies
\begin{equation}
\label{eq:support_eval}
|S^k(x^*,\varepsilon)| = \min \big(p^k,\# \text{ of non-zero elements of }x^{k+1}\big).
\end{equation}
By Assumption \ref{assume:basic2}, it holds that $\|x^*\|_1 \geq \underline{B} \geq 2 C\sigma$. Consequently, if $k > 1/c (\log(s B/C\sigma))$, then
\[s B\exp(-ck) + C\sigma < 2C \sigma \leq \|x^*\|_1,\]
which implies
\[\|x^{k+1}-x^*\|_1 \leq s B (\prod_{t=0}^{k}\exp(-\tilde{c}_\mathrm{ss}^t)) + \tilde{C}_{\mathrm{ss}} \sigma \leq s B\exp(-ck) + C\sigma < \|x^*\|_1.\]
Then $\# \text{ of non-zero elements of }x^{k+1} \geq 1$. (Otherwise, $\|x^{k+1}-x^*\|_1 = \|0-x^*\|_1$, which contradicts.) Moreover, $p^k= \min(pk, s)$ for some constant $p>0$. Thus, as long as $k\geq 1/p$, we have $p^k\geq 1$. By (\ref{eq:support_eval}), we obtain
\[|S^k(x^*,\varepsilon)| > 0,\quad  \forall k > \max\Big( \frac{1}{p},  \frac{1}{c} \log\Big( \frac{s B}{C\sigma}  \Big) \Big),~\forall (x^*,\varepsilon)\in \bar{\X}(B,\underline{B}, s,\sigma). \]
Then, we have $\tilde{c}_\mathrm{ss}^k > c$ for large enough $k$, consequently, $\tilde{C}_{\mathrm{ss}} < C$.
\end{proof}

\section{The adaptive threshold rule used to produce Fig. \ref{fig:nmse_ista_lista}}

\begin{algorithm2e}[h]
\SetKwInOut{input}{Input}
\SetKwInOut{initial}{Initialization}
\input{Maximum iteration $K$, initial $\lambda^0, \epsilon^0$.}
\initial{Let $x^0=0, \lambda^1 = \lambda^0, \epsilon^1 = \epsilon^0$.}
\For{$k = 1,2,\cdots,K$} {
Conduct ISTA: $x^{k} = \eta_{\lambda^k/L}\Big( x^{k-1} - \frac{1}{L}A^T(Ax^{k-1} - b)\Big)$.\\
\uIf{$\|x^k-x^{k-1}\|<\epsilon^k$}{
Let $\lambda^{k+1} \leftarrow 0.5 \lambda^{k}$, $\epsilon^{k+1} \leftarrow 0.5 \epsilon^{k}$.
}\Else {
Let $\lambda^{k+1} \leftarrow \lambda^{k}$, $\epsilon^{k+1} \leftarrow  \epsilon^{k}$.
}
}
\KwOut{$x^K$}
  \caption{A thresholding rule for LASSO (Similar to that in
    \protect\cite{HaleYinZhang2008_sparse})}
  \label{algo:adaptive}
\end{algorithm2e}

We take $\lambda^0 = 0.2, \epsilon^0 = 0.05$ in our experiments.

\section{Training Strategy}
In this section we have a detailed discussion on the stage-wise training
strategy in empirical experiments. Denote $\Theta =
\{(W^k_1,W^k_2,\theta^k)\}_{k=0}^{K-1}$ as all the weights in the network. Note
that $(W^k_1,W^k_2)$ can be coupled as in (\ref{eq:couple_way}). Denote
$\Theta^\tau = \{(W^k_1,W^k_2,\theta^k)\}_{k=0}^{\tau}$ all the weights in the
$\tau$-th and all the previous layers. We assign a learning multiplier
$c(\cdot)$, which is initialized as 1, to each weight in the network. Define an
initial learning rate $\alpha_0$ and two decayed learning rates
$\alpha_1, \alpha_2$. In real training, we have
$\alpha_1=0.2\alpha_0, \alpha_2=0.02\alpha_0$. Our training strategy is
described as below:
\begin{itemize}
  \item Train the network layer by layer. Training in each layer consists of 3
    stages.
  \item In layer $\tau$, $\Theta^{\tau-1}$ is pre-trained. Initialize
        $c(W^\tau_1),c(W^\tau_2),c(\theta^\tau)=1$. The actual learning rates of
        all weights in the following are multiplied by their learning multipliers.
      \begin{itemize}
          \item Train $(W^\tau_1,W^\tau_2,\theta^\tau)$ the initial learning
            rate $\alpha_0$.
          \item Train $\Theta^\tau = \Theta^{\tau-1} \cup (W^\tau_1,W^\tau_2,\theta^\tau)$
            with the learning rates $\alpha_1$ and $\alpha_2$.
      \end{itemize}
  \item Multiply a decaying rate $\gamma$ (set to 0.3 in experiments) to each
    weight in $\Theta^\tau$.
  \item Proceed training to the next layer.
\end{itemize}

The layer-wise training is widely adopted in previous LISTA-type networks.  We
add the learning rate decaying that is able to stabilize the training process.
It will make the previous layers change very slowly when the training proceeds
to deeper layers because learning rates of first several layers will
exponentially decay and quickly go to near zero when the training process
progresses to deeper layers, which can prevent them varying too far from
pre-trained positions. It works well especially when the unfolding goes deep to
$K> 10$. All models trained and reported in experiments section are trained
using the above strategy.

\textbf{Remark} While adopting the above stage-wise training strategy,
we first finish a complete training pass, calculate the intermediate results and
final outputs, and then draw curves and evaluate the performance based on these
results, instead of logging how the best performance changes when the training
process goes deeper. This manner possibly accounts for the reason why some
curves plotted in Section \ref{sec:simulation} display some unexpected
fluctuations.

\end{document}